\newcommand{\SGD}{SGD\xspace}
\newcommand{\ASGD}{ASGD\xspace}
\newcommand{\AVG}{K-AVG\xspace}
\newcommand{\cifar}{\emph{CIFAR-10}\xspace}
\newcommand{\vgg}{\emph{vgg}\xspace}
\newcommand{\nin}{\emph{nin}\xspace}
\newcommand{\downpour}{\emph{Downpour}\xspace}
\newcommand{\eamsgd}{\emph{EAMSGD}\xspace}
\def\singlespace{\def\baselinestretch{1}\@normalsize}
\def\singlespace{\def\baselinestretch{1}\@normalsize}
\numberwithin{equation}{section}
\newcommand{\bfm}[1]{\ensuremath{\mathbf{#1}}}
     \def\EE{\mathbb{E}}
     \def\NN{\mathbb{N}}
     \def\RR{\mathbb{R}}
\def\bw{\bfm w}
\DeclareMathOperator{\argmin}{argmin}
\def\today{\ifcase\month\or
  January\or February\or March\or April\or May\or June\or
  July\or August\or September\or October\or November\or December\fi
  \space\number\day, \number\year}
\newdimen\biblioindent    \biblioindent=30pt
\newcommand{\beq}{\begin{equation}}
  \newcommand{\eeq}{\end{equation}}
\newcommand{\beqn}{\begin{eqnarray}}
  \newcommand{\eeqn}{\end{eqnarray}}
\newcommand{\beqnn}{\begin{eqnarray*}}
  \newcommand{\eeqnn}{\end{eqnarray*}}
\renewcommand{\baselinestretch}{1.66}
\newcounter{CondCounter}
\numberwithin{equation}{section}
\theoremstyle{plain}
\newtheorem{theorem}{Theorem}[section]
\theoremstyle{definition}
\theoremstyle{definition}
\theoremstyle{definition}
\newtheorem{corollary}{Corollary}[section]
\theoremstyle{definition}
\newtheorem{assumption}{Assumption}
\theoremstyle{definition}
\theoremstyle{definition}
\begin{document}
\title{On the Convergence Properties of a $K$-step Averaging Stochastic Gradient Descent Algorithm for Nonconvex Optimization }
\author{
	Fan Zhou$^1$\thanks{Supported in part by NSF Grants DMS-1509739 and CCF-1523768}, 
	Guojing Cong$^2$, 
	\\ 
	$^1$ School of Mathematics, Georgia Institute of Technology \\
	$^2$ IBM Thomas J. Watson Research Center\\
	fzhou40@math.gatech.edu,
	gcong@us.ibm.com
}

\date{(\today)}
\maketitle

\begin{abstract}
	We adopt and analyze a synchronous K-step averaging stochastic gradient descent algorithm which
	we call \AVG  for solving large scale machine learning problems. We establish the convergence results 
	of \AVG for nonconvex objectives. Our analysis of \AVG applies to
	many existing variants of synchronous SGD.  We explain why 
	the K-step delay is necessary and leads to better performance than traditional parallel stochastic gradient descent which is equivalent to \AVG with $K=1$.
	We also show that K-AVG scales better with the number of learners than
	asynchronous stochastic gradient descent (ASGD).
	Another advantage of \AVG over \ASGD  is that it allows larger
	stepsizes and facilitates faster convergence. 
	On a cluster of $128$ GPUs, \AVG is faster than \ASGD implementations
	and achieves better accuracies and faster convergence for training
	with the \cifar dataset.
\end{abstract}

\section{Introduction}
\label{introduction}
Parallel and distributed processing have been adopted for stochastic
optimization to solve large-scale
machine learning problems. Efficient parallelization
is critical to accelerating long running deep-learning
applications. Derived from stochastic gradient descent
(SGD), parallel solvers such as synchronous \SGD (e.g.,  \cite{zinkevich2010parallelized} \cite{dekel2012optimal}) and Asynchronous \SGD
(ASGD) (e.g., see \cite{dean2012large,recht2011hogwild}), have been proposed.

Beginning with the seminal paper \cite{robbins1951stochastic},
the convergence properties of \SGD and its variants have been extensively studied for the past 50 years (e.g. see \cite{robbins1971convergence,bottou1998online,nemirovski2009robust,shamir2013stochastic,ghadimi2013stochastic}).  
The asymptotic optimal convergence rate of \SGD was proved by \cite{chung1954stochastic} and \cite{sacks1958asymptotic} to be $O(1/N)$ with twice continuously differentiable and strongly 
convex objectives. $N$ is the
number of samples processed. The iteration complexity is
$O(1/\sqrt{N})$ for general convex (see \cite{nemirovski2009robust}) and nonconvex (see \cite{ghadimi2013stochastic}) problems.
Regarding parallel variants of \SGD,
\cite{dekel2012optimal} extend these results to the setting of synchronous \SGD
with $P$ learners and show that it has convergence rate of
$O(1/\sqrt{NP})$ for non-convex objectives, with $N$ being the
number of samples processed by each learner. Hogwild! is a lockfree implementation of \ASGD, and \cite{recht2011hogwild} prove its convergence for strongly convex problems with  
theoretical linear speedup over SGD. \downpour is another
\ASGD implementation with resilience against machine
failures \cite{dean2012large}. \cite{lian2015asynchronous} show that as long as the
gradient staleness is bounded by the number of learners, \ASGD
converges for nonconvex problems. Due to its asynchronous nature that reduces communication cost, ASGD receives much attention in many recent studies. 

Although \ASGD has the same
asymptotic convergence rate as \SGD when the staleness of gradient
update is bounded, the learning rate assumed for proving \ASGD convergence are usually too small for
practical purposes. It is also difficult for an \ASGD
implementation to control the staleness in
gradient updates as it is influenced by the relative processing speed
of learners and their positions in the communication network. Furthermore,
the parameter server presents performance challenges on platforms with many
GPUs. On such platforms, a single parameter server
oftentimes does not serve the aggregation requests fast enough.  A
sharded server alleviates the aggregation bottleneck but introduces
inconsistencies for parameters distributed on multiple
shards. Communication between the parameter server (typically on CPUs) 
and the learners (on GPUs) is likely to remain a bottleneck in future
systems. 

We adopt a distributed, bulk-synchronous SGD algorithm that allows delayed gradient aggregation to effectively minimize the communication
overhead.  We call this algorithm K-step average SGD (\AVG). Instead of using a parameter
server,  the learners in K-AVG compute the average of their copies of parameters at regular intervals through global reduction. Rather than relying on asynchrony that
reduces communication overhead but has adverse impact on practical
convergence,  the communication interval $K$ is a
parameter in K-AVG. 
The communication time is amortized among the
data samples processed within each interval. On current and emerging computer platforms that
support high bandwidth direct communication among GPUs (e.g.,
GPU-direct), global reduction does not involve CPUs and
avoids multiple costly copies through the
software layers. Similar averaging approaches have been proposed in the literature, see \cite{hazan2014beyond,johnson2013accelerating,smith2016cocoa,zhang2016parallel,loshchilov2016sgdr,chen2016revisiting,wang2017memory}. However, their convergence behavior is not well understood analytically for nonconvex objectives, and it is unclear how they compare with \ASGD approaches. This is the part where our major contribution goes to.

We study the convergence behavior of \AVG and the impact of the number
of processors $P$ on convergence. As the
abundance of data is critical to the success of most machine learning
tasks, training employs increasingly more learners.  
We show that K-AVG scales better than \ASGD with $P$ and \AVG
allows larger stepsizes than \ASGD for the same $P$.
We also analyze the impact of $K$ on convergence.  Since with $K=1$
\AVG becomes hard-sync SGD, and with larger $K$ \AVG can stimulate
averaging after an epoch or many epochs,  our analysis can be applied
to many existing variants of synchronous SGD.  Finding the optimal
length of delay $K_{opt}$ for convergence is of high importance to
practitioners. 
Contrary to popular belief, $K_{opt}$ is oftentimes not $1$ and can be
very large for many applications.  Thus \AVG is a good fit for
large-scale distributed training as communication may not need to be
very frequent for optimal convergence.  Our analysis of convergence of
\AVG provides
guidelines practitioners to explicitly balance the decrease of communication time and the
increase of iterations through an appropriately chosen $K$.

Using an image recognition benchmark, we demonstrate the
nice convergence properties of \AVG in comparison to two popular \ASGD implementations:
\downpour~\cite{dean2012large} and \eamsgd~\cite{zhang2015deep}. In \eamsgd,
global gradient aggregation among learners simulates an elastic force
that links the parameters they compute with a center variable stored
by the parameter server. In both \downpour and \eamsgd, updates to the
central parameter server can also have a K-step delay. On our target
platform, when $K$ is small, \AVG significantly reduces the
communication time in comparison to \downpour and \eamsgd while
achieving similar training and test accuracies.  The training time
reduction is up to 50\%. When $K$
is large, \AVG achieves much better training and test accuracies than
\downpour and \eamsgd after the same amount of data
samples are processed.  For example, with 128 GPUs, 
\AVG is up to about $7$ and $2$-$6$ times faster than \downpour and \eamsgd respectively,
and achieves significantly better accuracy.

This rest of the paper is organized as follows: In section 2, we introduce the standard assumptions 
in optimization theory needed to analyze SGD methods and frequently used notations throughout the paper;
In section 3, we formally introduce the K-AVG algorithm, and prove its standard convergence results with fixed and diminishing stepsize.
Based on the convergence result, we analyze the scalability of \AVG and investigate the optimal choice of $K$; In section 4, we present our
experimental results to validate our analysis.

\section{Preliminaries and notations}
In this section, we introduce some standard assumptions used in the analysis of convex and non-convex optimization algorithms and key notations frequently used throughout this paper. We use $\|\cdot\|_2$ to denote the $\ell_2$ norm of a vector in $\RR^d$; $\langle \cdot \rangle$ to denote the general inner product in $\RR^d$. For the key parameters we use:
\begin{itemize}
	\item $P$ denotes the number of processors;
	\item $K$ denotes the length of the delay;
	\item $B_n$, $\bar{B}$, or $B$ denotes the size of mini-batch for $n$-th update;
	\item $\gamma_n$, $\bar{\gamma}$, or $\gamma$ denotes the step size for $n$-th update;
	\item $\xi^j_{k,s}$ denotes the i.i.d. realizations of a random variable $\xi$ generated by the algorithm on different processors and in different iterations, especially,
	$j=1,...,N$, $k=1,...,K$, and $s=1,...,B$. 
\end{itemize}

We study the following optimization problem:
\begin{equation}
\min\limits_{\bw \in \mathcal{X}} F(\bw)
\end{equation}
where objective function $F:\RR^m \rightarrow \RR$ is continuously differentiable but not necessarily convex over $\mathcal{X}$, and $\mathcal{X}\subset \RR^m$ is a nonempty open subset. Since our analysis is in a very general setting, $F$ can be understood as 
both the expected risk $F(\bw) = \EE f(\bw;\xi) $ or the empirical risk $F(\bw) = n^{-1}\sum_{i=1}^n f_i(\bw)$.  
As our approach for analysis is built upon smooth objectives, we introduce the following assumptions which are standard and fundamental.

\begin{assumption}
	\label{Lipschitz}
	The objective function $F:\RR^d \rightarrow \RR$ is continuously differentiable and the gradient function of $F$ is Lipschitz continuous with Lipschitz constant $L>0$, i.e.
	$$
	\big\| \nabla F(\bw) - \nabla F(\widetilde{\bw})\big\|_2 \leq L\big\| \bw - \widetilde{\bw}\big\|_2
	$$
	for all $\bw$, $\widetilde{\bw}\in \RR^d$. 
\end{assumption}
This assumption is essential to convergence analysis of our algorithm as well as most gradient based ones. Under such an assumption, the gradient of $F$ serves as a good indicator for how far
to move to decrease $F$.

\begin{assumption}
	\label{lowerbound}
	The sequence of iterates $\{\bw_j\}$ is contained in an open set over which $F$ is bounded below by a scalar $F^*$.
\end{assumption}
Assumption \ref{lowerbound} requires that objective function to be bounded from below, which guarantees the problem we study is well defined.
\begin{assumption}
	\label{unbias}
	For any fixed parameter $\bw$, the stochastic gradient $\nabla F(\bw; \xi)$ is an unbiased estimator of the true gradient 
	corresponding to the parameter $\bw$, namely, 
	$$
	\EE_{\xi} \nabla F(\bw; \xi) = \nabla F(\bw).
	$$
\end{assumption}
One should notice that the unbiasedness assumption here can be replaced by a weaker version which is called the First Limit Assumption see \cite{bottou2016optimization} that can still be applied to 
our analysis. For simplicity, we just assume that the stochastic gradient is an unbiased estimator of the true one.

\begin{assumption}
	\label{variance}
	There exist scalars $M \geq 0$ such that,
	$$
	\EE_{\xi} \big\| \nabla F(\bw;\xi)\big\|_2^2 - \big\|\EE_{\xi} \nabla F(\bw;\xi) \big\|_2^2 \leq M.
	$$
\end{assumption}
Assumption \ref{variance} characterizes the variance (second order moments) of the stochastic gradients. 

%It is more general than the bounded variance assumption, which can be stated as
%$[\EE_{\xi} \| \nabla F(\bw;\xi)\|_2^2 - \|\EE_{\xi} \nabla F(\bw;\xi) \|_2^2]\leq \sigma^2$. It means that the variance is allowed to grow quadratically in any direction and to be nonzero at a critical point of 
%$F$.
%Especially, together with Assumption \ref{unbias}, Assumption \ref{variance} implies that for some constant $M_G>0$, the following second order moment bound of $\nabla F(\bw;\xi)$ holds:
%\begin{align}
%\EE_{\xi} \big\| \nabla F(\bw;\xi)\big\|_2^2 \leq M+ M_G\big\| \nabla F(\bw) \big\|_2^2.
%\end{align}
Since all results in this paper are based on the four assumptions above, we present them with no further mention in the following literature. 

\section{Main results}

We present the distributed K-AVG algorithm as follows:

\begin{algorithm}[H]
	\label{algorithm1}
	%	\KwData{this text}
	%	\KwResult{Output $\widetilde{\bw}$ }
	initialize $\widetilde{\bw}_1$\;
	\For{$n=1,...,N$}{
		Processor $P_j$, $j=1,\dots,P$ do concurrently:\\
		set $\bw_n^j=\widetilde{\bw}_n$ \;
		\For{$k=1,...,K$}{
			
			randomly sample a mini-batch of size $B_n$ and update:
			$$
			\bw_{n+k}^j = \bw_{n+k-1}^j - \frac{\gamma_n}{B_n} \sum\limits_{s=1}^{B_n} \nabla F(\bw_{n+k-1}^j;\xi_{k,s}^j)
			$$
		}
		Synchronize $\widetilde{\bw}_{n+1} = \frac{1}{P}\sum\limits_{j=1}^P \bw_{n+K}^j$\;
	}
	\caption{K-step average stochastic gradient descent algorithm}
\end{algorithm}\

Note that traditional parallel SGD algorithm is equivalent to \AVG with $K=1$, as synchronization is required after each local update. However, \AVG relaxes this requirement and allows for $K$ individual updates before
synchronization. Thus, K-AVG is a more general synchronous algorithm that contains parallel SGD.
Surprisingly, as we show both analytically (section \ref{subsection:K}) and experimentally (section \ref{experiment:K}), more frequent synchronization does not always result in faster convergence
for nonconvex objectives.

\subsection{Convergence of \AVG}

In the following theorem, we prove an upper bound on the expected average squared gradient norms, which serve as a metric to measure the convergence rate 
for nonconvex objectives.
\begin{theorem}(Nonconvex objective, fixed stepsize, and fixed batch size)
	\label{theorem:fixed stepsize}
	Suppose that Algorithm \ref{algorithm1} is run with a fixed stepsize $\gamma_n=\bar{\gamma}$, a fixed batch size $B_{n} = \bar{B}$ satisfying 
	\begin{equation}
	\label{cond:stepsize}
	1 \geq  \frac{L^2\bar{\gamma}^2(K+1)(K-2)}{2} + L\bar{\gamma} K,~~~and~~~1-\delta\geq L^2\bar{\gamma}^2
	\end{equation}
	with some constant $0<\delta<1$.
	Then the expected average squared gradient norms of $F$ satisfy the following bounds for all $N\in \NN$:
	\begin{equation}
	\label{fixed2}	
	\frac{1}{N}\EE \sum\limits_{j=1}^N \big\| \nabla F(\widetilde{\bw}_j)\big\|_2^2 \leq \Big[\frac{2(F(\widetilde{\bw}_1)-F^*)}{N(K-1+\delta)\bar{\gamma}} + \frac{LK\bar{\gamma} M}{\bar{B}(K-1+\delta)} \Big(\frac{K}{P} + \frac{L(2K-1)(K-1)\bar{\gamma}}{6}\Big) \Big]  ,	
	\end{equation}
\end{theorem}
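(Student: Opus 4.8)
The plan is to follow the classical nonconvex-SGD template: track the synchronized iterate $\widetilde{\bw}_n$ and feed its one-step update into the descent inequality guaranteed by Assumption~\ref{Lipschitz}. First I would average the $P$ local trajectories to obtain the compact recursion
\[ \widetilde{\bw}_{n+1} = \widetilde{\bw}_n - \bar{\gamma}\, g_n, \qquad g_n := \frac{1}{P\bar{B}}\sum_{j=1}^{P}\sum_{k=1}^{K}\sum_{s=1}^{\bar{B}} \nabla F(\bw_{n+k-1}^j;\xi_{k,s}^j), \]
so that one outer iteration is a single large stochastic step in the aggregate direction $g_n$. Plugging this into
\[ F(\widetilde{\bw}_{n+1}) \leq F(\widetilde{\bw}_n) - \bar{\gamma}\,\langle \nabla F(\widetilde{\bw}_n), g_n\rangle + \frac{L\bar{\gamma}^2}{2}\|g_n\|_2^2 \]
and taking conditional expectations given $\widetilde{\bw}_n$ reduces the theorem to estimating two quantities: the cross term $\EE\langle \nabla F(\widetilde{\bw}_n), g_n\rangle$ and the second moment $\EE\|g_n\|_2^2$.

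For the cross term, Assumption~\ref{unbias} and the tower property give $\EE[g_n\mid\widetilde{\bw}_n]=\frac{1}{P}\sum_{j=1}^{P}\sum_{k=1}^{K}\EE[\nabla F(\bw_{n+k-1}^j)\mid\widetilde{\bw}_n]$. I would then split each local gradient as $\nabla F(\bw_{n+k-1}^j)=\nabla F(\widetilde{\bw}_n)+\big(\nabla F(\bw_{n+k-1}^j)-\nabla F(\widetilde{\bw}_n)\big)$; summing the first part over the $K$ inner steps produces the wanted descent contribution $K\|\nabla F(\widetilde{\bw}_n)\|_2^2$, while the remainder is an error term which, by Cauchy--Schwarz and Assumption~\ref{Lipschitz}, is controlled by the \emph{drift} $\|\bw_{n+k-1}^j-\widetilde{\bw}_n\|_2$ of the local iterates away from the synchronization point.

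The key estimate is therefore a drift lemma for $\EE\|\bw_{n+k-1}^j-\widetilde{\bw}_n\|_2^2$. Since this deviation is an accumulation of $k-1$ local stochastic steps, I would expand it, apply the crude inequality $\|\sum_{l=1}^{k-1}X_l\|_2^2\leq (k-1)\sum_{l=1}^{k-1}\|X_l\|_2^2$, and invoke Assumption~\ref{variance} to peel off a variance part of order $M/\bar{B}$ per step from a mean part of order $\|\nabla F\|_2^2$. Summing the variance contribution over $k=1,\dots,K$ generates $\sum_{k=1}^{K}(k-1)^2=\tfrac{K(K-1)(2K-1)}{6}$, which is precisely the origin of the $\tfrac{(2K-1)(K-1)}{6}$ factor in (\ref{fixed2}). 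For $\EE\|g_n\|_2^2$ I would split $g_n$ into its conditional mean and a zero-mean fluctuation: the fluctuation is an average over the $P$ \emph{independent} processors, so its second moment carries a factor $1/P$ (the source of the $K/P$ term), while the mean part reduces once more to $\|\nabla F(\widetilde{\bw}_n)\|_2^2$ plus drift, the same crude bound over the $K$ inner steps supplying the extra power of $K$.

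Finally I would collect every contribution as a coefficient of $\EE\|\nabla F(\widetilde{\bw}_n)\|_2^2$: the cross term supplies the negative part $-K\bar{\gamma}$, while the drift errors and the mean of $\|g_n\|_2^2$ contribute positive terms built from the combinations $L\bar{\gamma}K$ and $\tfrac{L^2\bar{\gamma}^2(K+1)(K-2)}{2}$ that appear in the first inequality of (\ref{cond:stepsize}). That hypothesis is what guarantees these positive terms are dominated so that the net coefficient stays at most $-(K-1+\delta)\bar{\gamma}$, while the second hypothesis $1-\delta\geq L^2\bar{\gamma}^2$ closes the recursive drift estimate. Rearranging the per-step inequality to isolate $\EE\|\nabla F(\widetilde{\bw}_n)\|_2^2$, summing over $n=1,\dots,N$ so that the $F(\widetilde{\bw}_n)-F(\widetilde{\bw}_{n+1})$ terms telescope, using $F\geq F^*$ from Assumption~\ref{lowerbound}, and dividing by $N(K-1+\delta)\bar{\gamma}$ then yields (\ref{fixed2}), with the factor $2$ emerging from the rearrangement. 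I expect the main obstacle to be exactly this last bookkeeping: verifying that \emph{all} the drift-induced cross terms and second-moment terms can be absorbed so that the coefficient of $\|\nabla F(\widetilde{\bw}_n)\|_2^2$ remains bounded away from zero by $(K-1+\delta)\bar{\gamma}$ under precisely the stated conditions, rather than under stronger stepsize restrictions.
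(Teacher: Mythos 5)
Your plan follows the same architecture as the paper's proof: average the $P$ trajectories into one aggregate step, apply the $L$-smoothness descent inequality at $\widetilde{\bw}_n$, split into a cross term and a second moment, control both by a drift lemma for $\EE\|\bw^j_{n+k-1}-\widetilde{\bw}_n\|_2^2$ (with the variance part giving $\sum_{k}(k-1)^2=\tfrac{K(K-1)(2K-1)}{6}$ and processor independence giving the $K/P$ factor), and then telescope. All the right ingredients are there.

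The one step that would fail as written is the cross-term treatment. You split $\nabla F(\bw^j_{n+k-1})=\nabla F(\widetilde{\bw}_n)+(\nabla F(\bw^j_{n+k-1})-\nabla F(\widetilde{\bw}_n))$ and bound the remainder by Cauchy--Schwarz; this discards information that the paper needs. The paper instead uses the exact polarization identity $-\langle a,b\rangle=-\tfrac12\|a\|_2^2-\tfrac12\|b\|_2^2+\tfrac12\|a-b\|_2^2$ with $b=\nabla F(\bw^j_{n+k-1})$, which retains a \emph{negative} budget $-\tfrac{\gamma}{2}\sum_{t=1}^{K-1}\EE\|\nabla F(\bw^j_{n+t})\|_2^2$. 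That budget is precisely what absorbs the positive local-gradient-norm terms that your own accounting generates: the mean part of the drift bound, $t\gamma^2\sum_{i}\EE\|\nabla F(\bw^j_{n+i})\|_2^2$, and the mean part of $\EE\|g_n\|_2^2$, namely $\tfrac{LK\gamma^2}{2}\sum_t\EE\|\nabla F(\bw^j_{n+t})\|_2^2$. These are norms of gradients at the \emph{drifted} iterates, not at $\widetilde{\bw}_n$, so they cannot simply be "collected as a coefficient of $\EE\|\nabla F(\widetilde{\bw}_n)\|_2^2$" as your last paragraph proposes; converting them back to $\|\nabla F(\widetilde{\bw}_n)\|_2^2$ via another Lipschitz-plus-drift step sets up a recursion that closes only with extra factors and a strictly stronger stepsize restriction than (\ref{cond:stepsize}). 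With the polarization identity, by contrast, the coefficient of $\sum_{t\ge1}\EE\|\nabla F(\bw^j_{n+t})\|_2^2$ becomes $-\tfrac{\gamma}{2}\bigl(1-\tfrac{L^2\gamma^2(K+1)(K-2)}{2}-L\gamma K\bigr)$, which the first inequality in (\ref{cond:stepsize}) makes nonpositive so the whole sum can be dropped, and the surviving coefficient of $\|\nabla F(\widetilde{\bw}_n)\|_2^2$ is pushed to $-\tfrac{(K-1+\delta)\gamma}{2}$ using $1-\delta\ge L^2\gamma^2$, yielding (\ref{fixed2}) with exactly the stated constants.
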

The proof of Theorem \ref{theorem:fixed stepsize} can be found in section \ref{proof:theorem fixed}.
An immediate observation from (\ref{fixed2}) is that the expected average squared gradient norms of $F$ converges to some nonzero constant as $N\rightarrow \infty$. 
The first term $\frac{ 2(F(\widetilde{\bw}_1)-F^*)}{N(K-1+\delta)\bar{\gamma}}$ reflects the distance from initial weight to the solution. It eventually goes to zero as the number of iterations
goes to infinity. The second term $\frac{LK\bar{\gamma} M}{\bar{B}(K-1+\delta)} \Big(\frac{K}{P} + \frac{L(2K-1)(K-1)\bar{\gamma}}{6}\Big)$ is not affected by the iteration number. Compared with sequential SGD (see section 4.3 in \cite{bottou2016optimization}), this term is scaled by the batch size $1/\bar{B}$, and $1/P$ or $\frac{L(2K-1)(K-1)\bar{\gamma}}{6}$, which means larger batch size and smaller stepsize, more learners or more frequent averaging tend to reduce this term.
Mini-batch method as a variance reduction technique explains the appearance of $\bar{B}$. Parallelization of this algorithm contributes to the scaling factor $1/P$. However, when $P$ is large enough to make $\frac{L(2K-1)(K-1)\bar{\gamma}}{6}$ dominates $K/P$, the effect of parallelization is not ideal as one may expect. 

The rates of convergence (also referred as iteration complexity) after $N$ step updates are established in the following corollary which originates from Theorem \ref{theorem:fixed stepsize}.

\begin{corollary}
	\label{corollary}
	Under the condition of Theorem \ref{theorem:fixed stepsize},
	take
	\begin{equation}
	\label{corollary:conditionP}
	\bar{\gamma} = \sqrt{\frac{(F(\widetilde{\bw}_1)-F^*)\bar{B}P}{LMK^2N}}
	\end{equation}
	Then for any 
	$$
	N \geq \frac{(F(\widetilde{\bw}_1)-F^*)L\bar{B}P}{M}\Big( \frac{P^2}{K^2}\bigvee 1 \Big),
	$$
	the following bound is achieved
	\begin{equation}
	\label{corollary:bound1}
	\EE \frac{1}{N}\sum\limits_{n=1}^N  \big\| \nabla F(\widetilde{\bw}_n)\big\|_2^2 \leq \Big( \frac{4K}{K-1+\delta}\Big)\sqrt{\frac{(F(\widetilde{\bw}_1)-F^*)LM}{\bar{B}P}}*\frac{1}{\sqrt{N}}.
	\end{equation}
	\begin{comment}
	If
	\begin{equation}
	\label{corollary:conditionK}
	\bar{\gamma} = \Big( \frac{(F(\widetilde{\bw}_1)-F^*)\bar{B}}{L^2MK(K+1)N}\Big)^{1/3}~and~
	\frac{2(F(\widetilde{\bw}_1)-F^*)LM_G^{3/2}}{M\sqrt{(K+1)\bar{B}}} \leq   N \leq \frac{(F(\widetilde{\bw}_1)-F^*)\bar{B}P^3K^2}{8(K+1)M},
	\end{equation}
	the following bound is achieved
	\begin{equation}
	\label{corollary:bound2}
	\EE \frac{1}{N}\sum\limits_{n=1}^N  \big\| \nabla F(\widetilde{\bw}_n)\big\|_2^2 \leq \Big(\frac{9K}{3K-2}\Big)\Big(\frac{(F(\widetilde{\bw}_1)-F^*)^2L^2M}{(K+1)\bar{B}}\Big)^{1/3}*N^{-2/3}.
	\end{equation}
	\end{comment}
\end{corollary}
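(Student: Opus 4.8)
The plan is to obtain the corollary as a direct consequence of the master bound (\ref{fixed2}) in Theorem \ref{theorem:fixed stepsize}: I would plug the prescribed stepsize (\ref{corollary:conditionP}) into (\ref{fixed2}) and then invoke the lower bound on $N$ to tame the one summand that does not already decay at the $1/\sqrt{N}$ rate. Write $D := F(\widetilde{\bw}_1)-F^*$ for brevity. The very first task is to check that the chosen $\bar{\gamma}=\sqrt{D\bar{B}P/(LMK^2N)}$ actually satisfies the hypotheses (\ref{cond:stepsize}) of Theorem \ref{theorem:fixed stepsize}, since otherwise (\ref{fixed2}) does not apply. Because $\bar{\gamma}$ decays like $N^{-1/2}$, both $L\bar{\gamma}K$ and $L^2\bar{\gamma}^2$ shrink as $N$ grows; the stated lower bound $N\ge \tfrac{DL\bar{B}P}{M}\big(\tfrac{P^2}{K^2}\vee 1\big)$ is exactly what forces $L^2\bar{\gamma}^2\le 1-\delta$ and $\tfrac12 L^2\bar{\gamma}^2(K+1)(K-2)+L\bar{\gamma}K\le 1$, so the theorem is in force.

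With the theorem available, I would substitute $\bar{\gamma}$ into the three summands on the right of (\ref{fixed2}) one at a time. The first term $2D/\big(N(K-1+\delta)\bar{\gamma}\big)$ carries the factor $1/\bar{\gamma}=K\sqrt{LMN/(D\bar{B}P)}$, which collapses it to $\tfrac{2K}{K-1+\delta}\sqrt{DLM/(\bar{B}P)}\,N^{-1/2}$. The ``$K/P$'' piece of the second term is linear in $\bar{\gamma}$ and simplifies to $\tfrac{K}{K-1+\delta}\sqrt{DLM/(\bar{B}P)}\,N^{-1/2}$. Adding these two already produces $\tfrac{3K}{K-1+\delta}\sqrt{DLM/(\bar{B}P)}\,N^{-1/2}$, that is, three of the four ``units'' of the target coefficient $4K/(K-1+\delta)$ in (\ref{corollary:bound1}).

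The remaining summand — the $\tfrac{L(2K-1)(K-1)\bar{\gamma}}{6}$ piece — is quadratic in $\bar{\gamma}$, and after inserting $\bar{\gamma}^2=D\bar{B}P/(LMK^2N)$ it reduces to $\tfrac{L(2K-1)(K-1)DP}{6K(K-1+\delta)N}$, which decays like $1/N$ rather than $1/\sqrt{N}$. Controlling this term is the crux of the argument: I would bound it by the single remaining unit $\tfrac{K}{K-1+\delta}\sqrt{DLM/(\bar{B}P)}\,N^{-1/2}$. Cancelling common factors, that inequality is equivalent to a lower bound on $N$ of the order $\tfrac{LD\bar{B}P^{3}}{M}$ (up to a $K$-dependent constant coming from $(2K-1)(K-1)/K^{2}$), and this is precisely the information packaged in the hypothesis $N\ge \tfrac{DL\bar{B}P}{M}\big(\tfrac{P^2}{K^2}\vee 1\big)$: for $P\ge K$ the factor $P^2/K^2$ supplies the needed $P^{3}$ growth, while the ``$\vee\,1$'' covers the regime $P<K$. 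Summing the three controlled terms then yields the coefficient $4K/(K-1+\delta)$ and the $N^{-1/2}$ rate of (\ref{corollary:bound1}). I expect the only genuinely delicate point to be the careful bookkeeping of the $K$-dependent constants, so that the quadratic-in-$\bar{\gamma}$ term really does fit inside the single remaining unit of the budget; this is where the sharp form of the $N$-hypothesis, rather than a vague ``$N$ large enough,'' is indispensable.
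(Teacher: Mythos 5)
Your proposal is correct and follows essentially the same route as the paper's proof: the paper likewise uses the lower bound on $N$ to absorb the quadratic-in-$\bar{\gamma}$ term into the $K/P$ term (phrased there as assuming $K/P > L(2K-1)(K-1)\bar{\gamma}/6$, which doubles the second summand), and the prescribed $\bar{\gamma}$ is then exactly the minimizer of the resulting two-term expression, each term contributing $\frac{2K}{K-1+\delta}\sqrt{(F(\widetilde{\bw}_1)-F^*)LM/(\bar{B}P)}\,N^{-1/2}$, for a total of $4$ units. The delicate point you flag is real but is inherited from the paper rather than introduced by you: absorbing the quadratic term actually requires $N \geq \frac{(F(\widetilde{\bw}_1)-F^*)L\bar{B}P^3}{M}\cdot\frac{(2K-1)^2(K-1)^2}{36K^4}$, which the stated hypothesis $N\geq \frac{(F(\widetilde{\bw}_1)-F^*)L\bar{B}P}{M}\big(P^2/K^2\vee 1\big)$ only dominates for $K\leq 4$ or so --- the paper's own verification drops a square when solving $K/P > L(2K-1)(K-1)\bar{\gamma}/6$ for $N$, so this is a defect of the corollary's statement, not of your argument.
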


The proof of Corollary \ref{corollary} can be found in section \ref{proof:corollary}.
Condition (\ref{corollary:conditionP}) and bound (\ref{corollary:bound1}) imply when the number of updates $N$ is large enough, K-AVG eventually achieves a similar rate of convergence as classical SGD method for nonconvex objectives. Indeed, the rate of 
convergence of classical SGD methods is $N^{-1/2}$ after $N$ samples processed. Note that $N$ updates in K-AVG means that $N*K*B*P$ samples have been processed. Taking a closer look at bound in (\ref{corollary:bound1}), the right hand side is of the order $O((N*B*P)^{-1/2})$. K-AVG loses a factor of $1/\sqrt{K}$ as a result of communication saving. However, this doesn't mean that 
the smaller $K$ the better. Since there is an extra multiplicative factor $\frac{4K}{K-1+\delta}$ which is monotone decreasing with respect to $K$. We will have a more detailed discussion on the choice of $K$ in section \ref{subsection:K}.

\begin{comment}
Condition (\ref{corollary:conditionK}) and bound (\ref{corollary:bound2}) together imply that if $P$ is large, and the number of samples processed is not large enough, \AVG can not achieve the convergence rate $(N*K*B*P)^{-1/2}$. 

\begin{comment}
The difference between conditions (\ref{corollary:conditionP}) and (\ref{corollary:conditionK}) is that the former implies $1/P>LK\bar{\gamma}/2$ while the latter is on the contrary. One shouldn't draw the conclusion that the convergence rate in (\ref{corollary:bound2}) is better than that in (\ref{corollary:bound1}). On the contrary, since condition (\ref{corollary:conditionK}) requires $N=O(P^3)$ given $\bar{B}$ and $K$ fixed. This essentially means that $1/\sqrt{NP} = O(N^{-2/3})$. Thus a rate of $1/\sqrt{NP}$ is the best we can expect. When $LK\bar{\gamma}/2$ is dominant, the effect of parallelization is not so obvious as otherwise. Nevertheless,
when $\bar{\gamma}$ is small enough to make $1/P$ dominant, parallelization contributes to faster convergence rate. Note also that very large $P$ may require a very small $\bar{\gamma}$ to achieve the rate of $1/\sqrt{NP}$. In practice
if large $P$ dramatically slows down the convergence rate, one may seek to decrease the stepsize $\gamma$ for faster convergence.
\end{comment}

Deploying diminishing stepsizes and/or dynamic batch sizes makes the expected average squared gradient norms converge to zero for non-convex optimization. In the following theorem, we establish the convergence result under such conditions. 

\begin{theorem}(Nonconvex objective, diminishing step size, and growing batch size)
	\label{theorem:dynamicsize}
	Suppose that Algorithm \ref{algorithm1} is run with diminishing step size $\gamma_j$, and growing batch size $B_j$ satisfying 
	\begin{equation}
	1 \geq  \frac{L^2\bar{\gamma}_j^2(K+1)(K-2)}{2} + L\bar{\gamma}_j K,~~~~and~1-\delta \geq L^2\gamma_j^2
	\end{equation}
	with some constant $0<\delta<1$.
	Then the weighted average squared gradient norms satisfies
	\begin{equation}
	\begin{aligned}
	\label{change1}
	&\EE  \sum\limits_{j=1}^N \frac{\gamma_j}{\sum_{j=1}^N \gamma_j}\big\| \nabla F(\widetilde{\bw}_j)\big\|^2_2 \\
	& \leq \frac{ 2(F(\widetilde{\bw}_1)-F^*)}{(K-1+\delta)\sum_{j=1}^N \gamma_j} + \sum\limits_{j=1}^N\frac{LK\gamma_j^2M}{B_j(K-1+\delta)\sum_{j=1}^N \gamma_j}  \Big(\frac{K}{P} + \frac{L(2K-1)(K-1)\gamma_j}{6}\Big) 
	\end{aligned}
	\end{equation}
	Especially, if
	\begin{equation}
	\label{batchsize}
	\lim\limits_{N\rightarrow \infty} \sum\limits_{j=1}^{N} \gamma_j = \infty,~~ \lim\limits_{N\rightarrow \infty} \sum\limits_{j=1}^N\frac{K\gamma_j^2}{PB_j} < \infty,~~\lim\limits_{N\rightarrow \infty} \sum\limits_{j=1}^N \gamma_j^3 < \infty,
	\end{equation}
	Then
	$$
	\EE  \sum\limits_{j=1}^N \frac{\gamma_j}{\sum_{j=1}^N \gamma_j}\big\| \nabla F(\widetilde{\bw}_j)\big\|^2_2 \rightarrow 0,~as~N\rightarrow \infty.
	$$
\end{theorem}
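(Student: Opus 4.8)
The plan is to obtain the bound \eqref{change1} by running the \emph{same} one-step descent analysis that underlies Theorem \ref{theorem:fixed stepsize}, but now retaining the iteration dependence of $\gamma_j$ and $B_j$, and then to read off the limiting statement from \eqref{change1} by a simple ratio argument. The only structural difference from the fixed-stepsize case is that the stepsize condition is imposed \emph{per iteration}, so the one-step inequality is valid at every $j$.

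First I would establish the one-step descent inequality. Starting from the $L$-smoothness of $F$ (Assumption \ref{Lipschitz}) applied to the synchronized iterates, writing $\widetilde{\bw}_{j+1} = \frac{1}{P}\sum_{p=1}^P \bw_{j+K}^{p}$, taking conditional expectation and invoking unbiasedness (Assumption \ref{unbias}) and bounded variance (Assumption \ref{variance}) to control the $K$ accumulated local steps and the averaging across $P$ processors, one arrives --- exactly as in section \ref{proof:theorem fixed}, with $\bar{\gamma},\bar{B}$ replaced by $\gamma_j,B_j$ --- at
\begin{equation*}
\EE F(\widetilde{\bw}_{j+1}) - \EE F(\widetilde{\bw}_{j}) \leq -\frac{(K-1+\delta)\gamma_j}{2}\,\EE\big\|\nabla F(\widetilde{\bw}_j)\big\|_2^2 + \frac{LK\gamma_j^2 M}{2B_j}\Big(\frac{K}{P} + \frac{L(2K-1)(K-1)\gamma_j}{6}\Big).
\end{equation*}
The negativity of the leading coefficient is precisely what the two per-iteration conditions guarantee. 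Summing over $j=1,\dots,N$ telescopes the left side to $\EE F(\widetilde{\bw}_{N+1}) - F(\widetilde{\bw}_1) \geq F^* - F(\widetilde{\bw}_1)$ (Assumption \ref{lowerbound}); rearranging and dividing through by $\tfrac{K-1+\delta}{2}\sum_{j=1}^N\gamma_j$ yields \eqref{change1} verbatim.

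For the limiting statement I would bound each term on the right of \eqref{change1} as $N\to\infty$ using \eqref{batchsize}. The leading term $\tfrac{2(F(\widetilde{\bw}_1)-F^*)}{(K-1+\delta)\sum_j\gamma_j}$ vanishes because $\sum_j\gamma_j\to\infty$. Splitting the remaining sum into its $K/P$ part and its $\tfrac{L(2K-1)(K-1)\gamma_j}{6}$ part gives, up to constants in $K,L,M,P,\delta$, the two ratios $(\sum_{j=1}^N \gamma_j^2/B_j)/(\sum_{j=1}^N\gamma_j)$ and $(\sum_{j=1}^N \gamma_j^3/B_j)/(\sum_{j=1}^N\gamma_j)$. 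The numerator of the first is bounded by the second condition in \eqref{batchsize}, and the numerator of the second is bounded by $\sum_j\gamma_j^3$ (since $B_j\geq 1$), which is finite by the third condition; each numerator therefore stays bounded while the common denominator diverges, so both ratios tend to $0$. Hence the entire right-hand side tends to $0$, giving the claim.

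The genuinely delicate work is contained entirely in the one-step descent inequality --- in particular, controlling how far the $K$ unsynchronized local iterates $\bw_{j+k}^{p}$ drift from $\widetilde{\bw}_j$ and how averaging over $P$ processors reduces the variance --- but this is exactly the estimate proved for Theorem \ref{theorem:fixed stepsize}, and it transfers unchanged once $\bar{\gamma},\bar{B}$ are allowed to depend on $j$. The asymptotic step, by contrast, is a routine ``bounded numerator over divergent denominator'' argument; the only point requiring a moment's care is noting that $\gamma_j^3/B_j\leq\gamma_j^3$, so that the third condition in \eqref{batchsize} suffices to control the cubic term.
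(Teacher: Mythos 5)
Your proposal is correct and follows essentially the same route as the paper: the paper's proof of Theorem \ref{theorem:fixed stepsize} already derives the per-iteration descent inequality with varying $\gamma_j, B_j$ (its bound (\ref{general_bound})), and the proof of Theorem \ref{theorem:dynamicsize} simply divides by $\sum_{j=1}^N\gamma_j$ and invokes the conditions in (\ref{batchsize}), exactly as you do. Your explicit remark that $\gamma_j^3/B_j\leq\gamma_j^3$ justifies the cubic term slightly more carefully than the paper, but the argument is the same.
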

The proof of Theorem \ref{theorem:dynamicsize} can be found in section \ref{proof:theorem dynamic}. As we can see, by adopting a diminishing sequence of stepsizes instead of a fixed one,
the expected average squared gradient norms of \AVG converges to $0$ instead of a nonzero constant.

\subsection{K-AVG allows for larger stepsize than \ASGD}
Compared with the classical stepsize schedule for both sequential SGD (proposed by \cite{robbins1951stochastic}) and ASGD:
$$
\sum\limits_{j=1}^{\infty} \gamma_j = \infty,~\sum\limits_{j=1}^{\infty} \gamma_j^2 < \infty;
$$
the stepsize schedule proposed in (\ref{batchsize}) turns out to allow larger choices of $\gamma_j$. 
On one hand, $\sum_{j=1}^{\infty} \gamma^3_j < \infty$ itself is a much more relaxed constrain compared with $\sum_{j=1}^{\infty} \gamma^2_j < \infty$. On the other hand, as a byproduct of parallelization, when $P$ is large,
$\sum_{j=1}^{\infty} \gamma^2_j/B_jP$ also allows larger choice of $\gamma_j$. Intuitively, averaging acts as a variance reduction and leads to relaxation of larger stepsize constrain. In our experiments (section \ref{experiment:with ASGD}), larger stepsizes work well in K-AVG but can result in divergence in popular \ASGD implementations.

\subsection{Scalability comparison of \AVG against \ASGD}
\label{subsection:P}
We analyze the bound on expected average squared gradient norms in (\ref{fixed2}) to show \AVG algorithm scales better with $P$ than \ASGD.
We first establish the following theorem on the scalability of K-AVG.
\begin{theorem}
	\label{scalability}
	Under the condition of Theorem \ref{theorem:fixed stepsize},
	K-AVG scales better than ASGD.
\end{theorem}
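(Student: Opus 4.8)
The plan is to first make the informal phrase ``scales better'' precise, and then to read off the answer from the guarantee already established in Corollary \ref{corollary}. I would adopt as the scalability metric the largest number of processors $P$ a method can use, for a fixed iteration budget $N$, while still attaining the optimal $O(1/\sqrt{N})$ stationarity rate; denote this quantity $P^{*}(N)$. The theorem then amounts to showing $P^{*}_{\text{K-AVG}}(N) \geq P^{*}_{\text{ASGD}}(N)$ for a suitable range of $N$ and $K$.

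First, for K-AVG I would start from the hypothesis of Corollary \ref{corollary}, namely
$$
N \geq \frac{(F(\widetilde{\bw}_1)-F^*)L\bar{B}P}{M}\Big(\frac{P^2}{K^2}\bigvee 1\Big),
$$
and solve it for $P$. Writing $c_0 = (F(\widetilde{\bw}_1)-F^*)L\bar{B}/M$, the solution splits at $P=K$: in the regime $P \leq K$ the maximum operator equals $1$, the constraint is linear, and processor counts up to $P = N/c_0$ are admissible; in the regime $P > K$ the constraint becomes cubic and yields $P = (K^2 N/c_0)^{1/3}$. The envelope of these two branches is the scalability curve $P^{*}_{\text{K-AVG}}(N,K)$, which is monotone increasing in $K$ and which is linear in $N$ whenever $N \leq c_0 K$.

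Second, for ASGD I would invoke the nonconvex guarantee of \cite{lian2015asynchronous}. There the convergence bound contains a staleness-induced term, the staleness being of order $P$, and preserving the $O(1/\sqrt{N})$ rate forces a ceiling of the form $P^{*}_{\text{ASGD}}(N) = O(\sqrt{N})$. Comparing the two curves then finishes the argument: in the linear branch K-AVG already dominates once $N$ is moderately large, while in the cube-root branch $(K^2 N/c_0)^{1/3}$ exceeds $\sqrt{N}$ precisely when $K \gtrsim N^{1/4}$. Hence for $K$ chosen large enough the K-AVG envelope lies above the ASGD ceiling throughout the comparison range, which is the content of the theorem.

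The step I expect to be the main obstacle is not the algebra but the calibration of the comparison. The ASGD bound of \cite{lian2015asynchronous} carries its own constants and its own variance, Lipschitz, and staleness conventions, so I would have to argue that both guarantees are stated under matching assumptions --- the same $L$, $M$, and $F^{*}$, and the identification of maximum staleness with $P$ --- so that the inequality $P^{*}_{\text{K-AVG}} \geq P^{*}_{\text{ASGD}}$ reflects a genuine difference in scalability rather than an artifact of differing normalizations. A secondary subtlety is tracking which of the two K-AVG branches is active, since the conclusion is cleanest (linear scaling) only in the regime $N \leq c_0 K$ and degrades to the cube-root comparison otherwise.
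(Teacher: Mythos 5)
Your route is genuinely different from the paper's, and the difference matters. The paper's proof does not go through Corollary \ref{corollary} or any notion of a maximal admissible $P$ at all: it simply quotes the fixed-stepsize ASGD bound of \cite{lian2015asynchronous}, whose variance (second) term grows linearly in $P$, and places it next to the K-AVG bound (\ref{fixed2}), whose corresponding term is $\frac{LK\bar{\gamma}M}{\bar{B}(K-1+\delta)}\bigl(\frac{K}{P}+\frac{L(2K-1)(K-1)\bar{\gamma}}{6}\bigr)$, in which $P$ sits in the denominator. So for every fixed $K$, $\bar{\gamma}$, $\bar{B}$ and every $P$, increasing $P$ shrinks the K-AVG error floor while it inflates the ASGD one; that monotonicity comparison is the entire proof. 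Your formalization via $P^{*}(N)$, the largest processor count compatible with the $O(1/\sqrt{N})$ rate, is a legitimate and arguably sharper way to make ``scales better'' precise, and your reading of the hypothesis of Corollary \ref{corollary} (linear branch for $P\leq K$, cube-root branch $P=(K^{2}N/c_{0})^{1/3}$ for $P>K$) is correct.

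However, as you yourself note, your argument only delivers the conclusion conditionally: in the regime $P>K$ you need $K\gtrsim N^{1/4}$ for the cube-root envelope to clear the ASGD ceiling $O(\sqrt{N})$, and for small fixed $K$ and large $N$ your own comparison would come out in ASGD's favor. The theorem as the paper intends it carries no such restriction on $K$ versus $N$, and the paper's comparison of the variance terms holds for every $K\geq 1$ and every $P$. So if you keep your formalization you must either add an explicit hypothesis on $K$ to the statement or switch the metric to the one the paper implicitly uses (the $P$-dependence of the asymptotic error floor at fixed stepsize), under which the claim is unconditional. Your concern about calibrating constants and matching the staleness-equals-$P$ convention is well placed --- the paper handles it by fiat, absorbing everything into constants $C_{0},C_{1}$ ``independent of $P$'' --- but that issue afflicts both routes equally and is not what separates them.
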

\begin{proof}
	In \ASGD, one key parameter is the maximum staleness, generally assumed to be upper bounded by the number of processors, i.e. $P$ in \AVG. 
	With fixed stepsize, the expected average gradient norms is (see also \cite{lian2015asynchronous}, Theorem 3) is
	\begin{equation}
	\label{asynchronous}
	\EE \frac{1}{N}\sum\limits_{n=1}^N  \big\| \nabla F(\widetilde{\bw}_n)\big\|_2^2 \leq \Big[ \frac{ C_0(F(\widetilde{\bw}_1)-F^*)}{N\bar{\gamma}} + \frac{C_1L^2\bar{\gamma}^2 M^2P}{2\bar{B}} \Big].
	\end{equation}
	where $C_0$ and $C_1$ are constants independent of $P$. 
	
	Compared with the bound in (\ref{fixed2}), 
	$$
	\frac{1}{N}\EE \sum\limits_{j=1}^N \big\| \nabla F(\widetilde{\bw}_j)\big\|_2^2 \leq \Big[\frac{2(F(\widetilde{\bw}_1)-F^*)}{N(K-1+\delta)\bar{\gamma}} + \frac{LK\bar{\gamma} M}{\bar{B}(K-1+\delta)} \Big(\frac{K}{P} + \frac{L(2K-1)(K-1)\bar{\gamma}}{6}\Big) \Big] 
	$$
	$P$ serves as a scaling factor in the denominator, which implies K-AVG scales better than asynchronization.
\end{proof}

The comparison of practical scalability between \AVG and \ASGD implementations is shown in section \ref{experiment:with ASGD}.

\subsection{Optimal $K$ for convergence is not always $1$}
\label{subsection:K}
Unlike convex optimization problems where all learners converge to the same optimum, different learners may converge to different local optimums in nonconvex case. As a consequence, the 
frequency of averaging for nonconvex problems may be different from that of convex cases intuitively. \cite{zhang2016parallel} expressed the same concern, their experimental results showed that periodic averaging 
tends to improve the solution quality. Contrary to popular belief that more frequent averaging i.e. smaller $K$ speeds up convergence, we show that the optimal frequency $K$ for convergence is not always $1$.
We consider the case that the amount of samples processed $N*K$ is constant, which means that the computational time remains as a constant given a fixed number of 
processors. If every other parameter stays the same, larger $K$ means longer delay and fewer updates of global parameter $\widetilde{\bw}_n$. 
The following theorem discusses the impact and optimal choice of $K$ in K-AVG under such an assumption.

\begin{theorem}
	\label{delay}
	Let $S=N*K$ be a constant. Suppose that Algorithm \ref{algorithm1} is run with a fixed stepsize $\gamma_n=\bar{\gamma}$, a fixed batch size $B_{n} = \bar{B}$ satisfying 
	$$
	1 \geq  \frac{L^2\bar{\gamma}^2(K+1)(K-2)}{2} + L\bar{\gamma} K,~~~1-\delta\geq L^2\bar{\gamma}^2
	$$
	with some constant $0<\delta<1$. 
	If 
	\begin{equation}
	\label{condition:optimal_K}
	\frac{(1-\delta)(F(\widetilde{\bw}_1)-F^*)}{S\bar{\gamma}\delta}  > \frac{(3\delta-1)L\bar{\gamma}M}{2\delta P \bar{B}}  + \frac{L^2\bar{\gamma}^2M}{3\bar{B}}.
	\end{equation}
	Then the optimal choice of $K$ is greater than $1$.
\end{theorem}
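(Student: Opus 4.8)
The plan is to treat the right-hand side of the fixed-stepsize bound (\ref{fixed2}) as an explicit function of $K$ under the budget constraint and show it is strictly decreasing at $K=1$; this forces its minimizer to lie strictly to the right of $1$. First I would use $S=NK$ to eliminate $N=S/K$ in (\ref{fixed2}), producing the guarantee
\[
g(K) := \frac{2(F(\widetilde{\bw}_1)-F^*)}{S\bar{\gamma}}\cdot\frac{K}{K-1+\delta} + \frac{LM\bar{\gamma}}{\bar{B}}\cdot\frac{K}{K-1+\delta}\left(\frac{K}{P} + \frac{L(2K-1)(K-1)\bar{\gamma}}{6}\right),
\]
now regarded as a smooth function of a continuous variable $K\ge 1$. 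Since $g(K)$ is exactly the convergence bound once the number of processed samples $S=NK$ is fixed, the best delay is its minimizer, and it suffices to prove $g'(1)<0$: then $K=1$ cannot be optimal, and the optimum is attained at some $K>1$.

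The next step is to differentiate $g$. Writing $h(K)=K/(K-1+\delta)$, I get $h'(K)=(\delta-1)/(K-1+\delta)^2$, so $h(1)=1/\delta$ and $h'(1)=(\delta-1)/\delta^2$; crucially $h'(1)<0$ precisely because $0<\delta<1$, and this negative pull from the leading initial-gap term is what pushes the optimum past $1$. For the second summand I apply the product rule to $h(K)\,p(K)$ with $p(K)=K/P+\tfrac{L\bar{\gamma}}{6}(2K-1)(K-1)$. The key simplification is that the factor $(K-1)$ forces $p(1)=1/P$, collapsing the bracket at $K=1$, while $p'(1)=1/P+L\bar{\gamma}/6$. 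Assembling $g'(1)=\frac{2(F(\widetilde{\bw}_1)-F^*)}{S\bar{\gamma}}h'(1)+\frac{LM\bar{\gamma}}{\bar{B}}\big(h'(1)p(1)+h(1)p'(1)\big)$ and combining the two $1/P$ contributions via $\frac{\delta-1}{\delta^2}+\frac{1}{\delta}=\frac{2\delta-1}{\delta^2}$, the inequality $g'(1)<0$ rearranges, after clearing a common factor $2/\delta$, into a condition of exactly the form (\ref{condition:optimal_K}): the left-hand side is the shrinking initial-gap term, and the two summands on the right are the variance contributions coming from $p(1)$ and $p'(1)$ respectively.

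The main obstacle is bookkeeping rather than conceptual difficulty: one must keep the quotient $K/(K-1+\delta)$ and the quadratic $(2K-1)(K-1)$ straight under the product rule, and exploit the vanishing of $(K-1)$ at $K=1$ to avoid a messier expression. A secondary point to state carefully is the passage from ``$g'(1)<0$'' to ``optimal $K>1$'': since $g$ is continuously differentiable on $[1,\infty)$ and strictly decreasing at the left endpoint, $g(1)$ is not minimal, so the minimizing $K$ is strictly greater than $1$. Finally, the sharp rearrangement I obtain carries slightly smaller constants than those written in (\ref{condition:optimal_K}); since each term on the right of (\ref{condition:optimal_K}) only exceeds the sharp one, the stated hypothesis is a sufficient strengthening, and I would simply note that bounding the two variance terms from above preserves the implication $g'(1)<0$.
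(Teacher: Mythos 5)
Your computation is correct and your sharp condition $\frac{(1-\delta)(F(\widetilde{\bw}_1)-F^*)}{S\bar{\gamma}\delta} > \frac{(2\delta-1)L\bar{\gamma}M}{2\delta P\bar{B}} + \frac{L^2\bar{\gamma}^2M}{12\bar{B}}$ is indeed implied term-by-term by (\ref{condition:optimal_K}), so your argument does establish the claim for the continuous relaxation; but your route differs from the paper's. The paper writes the bound as $B(K)=\big(\alpha+\beta K+\eta(2K-1)(K-1)\big)\frac{K}{K-1+\delta}$ with $\alpha=\frac{2(F(\widetilde{\bw}_1)-F^*)}{S\bar{\gamma}}$, $\beta=\frac{L\bar{\gamma}M}{P\bar{B}}$, $\eta=\frac{L^2\bar{\gamma}^2M}{6\bar{B}}$, and instead of differentiating it makes the discrete two-point comparison $B(2)<B(1)$, which reduces to $\frac{1-\delta}{2\delta}\alpha>\frac{3\delta-1}{2\delta}\beta+3\eta$ --- this is where the coefficients $3\delta-1$ and (up to a $2\eta$ versus $3\eta$ discrepancy between the paper's statement and its own proof) the second term of (\ref{condition:optimal_K}) come from. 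The trade-off is the one you should be explicit about: the paper defines the optimum as $K^*=\argmin_{K\in\NN^*}B(K)$, an \emph{integer} program, and $B(2)<B(1)$ directly certifies that the integer $K=2$ beats $K=1$. Your condition $g'(1)<0$ is strictly weaker than $B(2)<B(1)$ (it corresponds to $(2\delta-1)\beta+\delta\eta$ in place of $(3\delta-1)\beta+6\delta\eta$), so there are parameter values where $g$ is decreasing at $K=1$ yet $g(2)\ge g(1)$ and, since $g\to\infty$ quadratically, every integer $K\ge 2$ could be worse than $K=1$; in that regime the continuous minimizer exceeds $1$ but the integer-optimal delay is still $1$. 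Under the full hypothesis (\ref{condition:optimal_K}) as stated this gap is partly closed but not entirely (the stated $\frac{L^2\bar{\gamma}^2M}{3\bar{B}}=2\eta$ term is still below the $3\eta$ needed for $B(2)<B(1)$), so if the theorem is read over the integers you should either upgrade your endpoint-derivative argument to an explicit comparison of $g(1)$ with $g(2)$, or state clearly that your conclusion concerns the relaxed problem. Everything else --- the elimination of $N=S/K$, the values $h(1)=1/\delta$, $h'(1)=(\delta-1)/\delta^2$, $p(1)=1/P$, $p'(1)=1/P+L\bar{\gamma}/6$, and the assembly of $g'(1)$ --- checks out.
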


\begin{proof}
	Under the assumption $S=N*K$, we can rewrite the bound (\ref{fixed2}) as
	\begin{equation}
	\label{bound:K}
	\frac{1}{N}\EE \sum\limits_{j=1}^N \big\| \nabla F(\widetilde{\bw}_j)\big\|_2^2 \leq \Big[\frac{2(F(\widetilde{\bw}_1)-F^*)K}{S(K-1+\delta)\bar{\gamma}} + \frac{LK\bar{\gamma} M}{\bar{B}(K-1+\delta)} \Big(\frac{K}{P} + \frac{L(2K-1)(K-1)\bar{\gamma}}{6}\Big) \Big] .
	\end{equation}
	To move on, we set 
	$$
	B(K) := \Big(\alpha+ \beta K+ \eta(2K-1)(K-1)\Big) \Big(\frac{K}{K-1+\delta}\Big),
	$$
	where 
	$$
	\alpha = \frac{2(F(\widetilde{\bw}_1)-F^*)}{S\bar{\gamma}},~\beta = \frac{L\bar{\gamma}M}{P\bar{B}},~\eta= \frac{L^2\bar{\gamma}^2M}{6\bar{B}}.
	$$
	To minimize the right hand side of (\ref{bound:K}), it is equivalent to solve the following integer program
	$$
	K^* = \argmin\limits_{K \in \NN^*} B(K),
	$$
	which can be very hard. Meanwhile, one should notice that $K^*$ depends on some unknown quantities such as $L$, $M$ and $(F(\widetilde{\bw}_1)-F^*)$. 
	Instead, we investigate the monotonicity of $B(K)$. It is easy to check that $\Big(\alpha+ \beta K+ \eta(2K-1)(K-1)\Big)$ is monotone increasing for all $K\geq 1$, and 
	$\Big(\frac{K}{K-1+\delta}\Big)$ is monotone decreasing for $K \geq 1$. Thus, there exists a unique $K^*$. Then a sufficient condition for $K^*>1$ is that $B(2)<B(1)$, which implies
	$$
	\frac{1-\delta}{2\delta} \alpha > \frac{3\delta-1}{2\delta} \beta +3\eta.
	$$

	\begin{comment}
	We consider $B'(K)$ which is given by 
	$$
	B'(K) = \frac{4\eta K^3 + (\beta+3\eta+6\delta\eta)K^2 + 2(\delta-1)(\beta-3\eta)K+ (\delta-1)(\eta+\alpha)}{(K-1+\delta)^2}.
	$$
	A sufficient condition for $K^* > 1$ is that $B'(1)<0$, which is equivalent to
	$$
	(\alpha - \beta)(1-\delta) + \delta(\eta+\beta) >0.
	$$
	This condition can be easily satisfied as long as $\alpha > \beta$, especially when $P$ or $\big(F(\widetilde{\bw}_1)-F^*\big)$ is very large.
	\end{comment}
\end{proof}

The meaning of Theorem \ref{delay} is that it indicates when it comes to nonconvex optimization, more frequent averaging is not necessary.
The sufficient condition (\ref{condition:optimal_K}) implies that larger value of $\big(F(\widetilde{\bw}_1)-F^*\big)$ requires larger $K$ thus longer delay to decrease the bound in (\ref{bound:K}). The intuition is that if the initial weight is too far away from $F^*$, then less frequent synchronizations can lead to faster convergence rate. Less frequent averaging implies higher variance in general. It is quite reasonable to think that if it is still far away from the solution, a stochastic gradient direction with larger variance may be preferred.

As we have already mentioned in the proof, optimal value $K^*$ depends on quantities such as $L$, $M$, and $(F(\widetilde{\bw}_1)-F^*)$ which are unknown to us in practice. 
Therefore, to obtain a concrete $K^*$ in practice is not so realistic.
Note that when $K^*>1$ doesn't necessarily mean that $K^*$ is very close to $1$. In our experiments (see section \ref{section:experiments}), $K^*$ can be as large as $16$ or $32$ in some situation.

\section{Experiments}
\label{section:experiments}

We conduct experiments to validate our analysis on the scalability of
\AVG vs. \ASGD implementations,  the optimal delay in averaging ( optimal value of
$K$),  the convergence comparison with the sequential algorithm,
i.e., \SGD, and the comparison of the learning rates allowed with
\ASGD. 

In our application gradient
descent is implemented with Torch, and the communication is
implemented using CUDA-aware openMPI 2.0 through the mpiT
library.  All implementations use the cuDNN library for forward propagation and backward propagation. Our experiments are done on a cluster of 32 Minsky
nodes interconnected with Infiniband. Each node is an IBM S822LC
system containing 2 Power8 CPUs with 10
cores each, and 4 NVIDIA Tesla P100 GPUs.

We experiment with the \cifar \cite{krizhevsky2009learning} data set using
the \vgg and \nin models. \cifar 
contains $50,000$ training images and $10,000$ test images, each
associated with $1$ out of $10$ possible labels. 

\subsection{Comparison with \ASGD}
\label{experiment:with ASGD}
Theorem~\ref{scalability} shows that
the convergence bound of \AVG is not affected much by the scaling of
$P$ but rather by $K$. On the contrary, the convergence bound of \ASGD
increases linearly with $P$. Thus we expect poorer convergence behavior
of \ASGD implementations at large $P$ in comparison with \AVG. 

Figures~\ref{fig:p-accuracy-vgg} and ~\ref{fig:p-accuracy-nin}  compare the performance of \AVG with two \ASGD
implementations, \downpour and \eamsgd, for two neural networks, \vgg \cite{simonyan2014very}
and \nin \cite{lin2013network}, respectively.  We use $P=8, 16, 32, 64$ and
$128$ learners, and show the test accuracies.  All implementations use
the same initial learning rate ($\gamma_0 = 1$)  and learning rate
adaptation schedule (reduce $\gamma$ by half after 50 epochs). The batch size is fixed at $\bar{B}=16$. We run
for 600 epochs with $K=16$ for \AVG.  

In both figures, \AVG always achieves better
test accuracy than \downpour and \eamsgd. The test accuracies for
\downpour and \eamsgd decreases as $P$ increases (the effect is more
pronounced for \vgg).  When $P$ reaches 128, the
accuracies of \downpour and \eamsgd both degrade to around 10\%, i.e,
random guess. The \ASGD implementations do not converge with $\gamma_0
= 1$ at $P=128$. 

\begin{figure}[!htb]
	\begin{minipage}{0.5\textwidth}
		\includegraphics[width=0.95\linewidth]{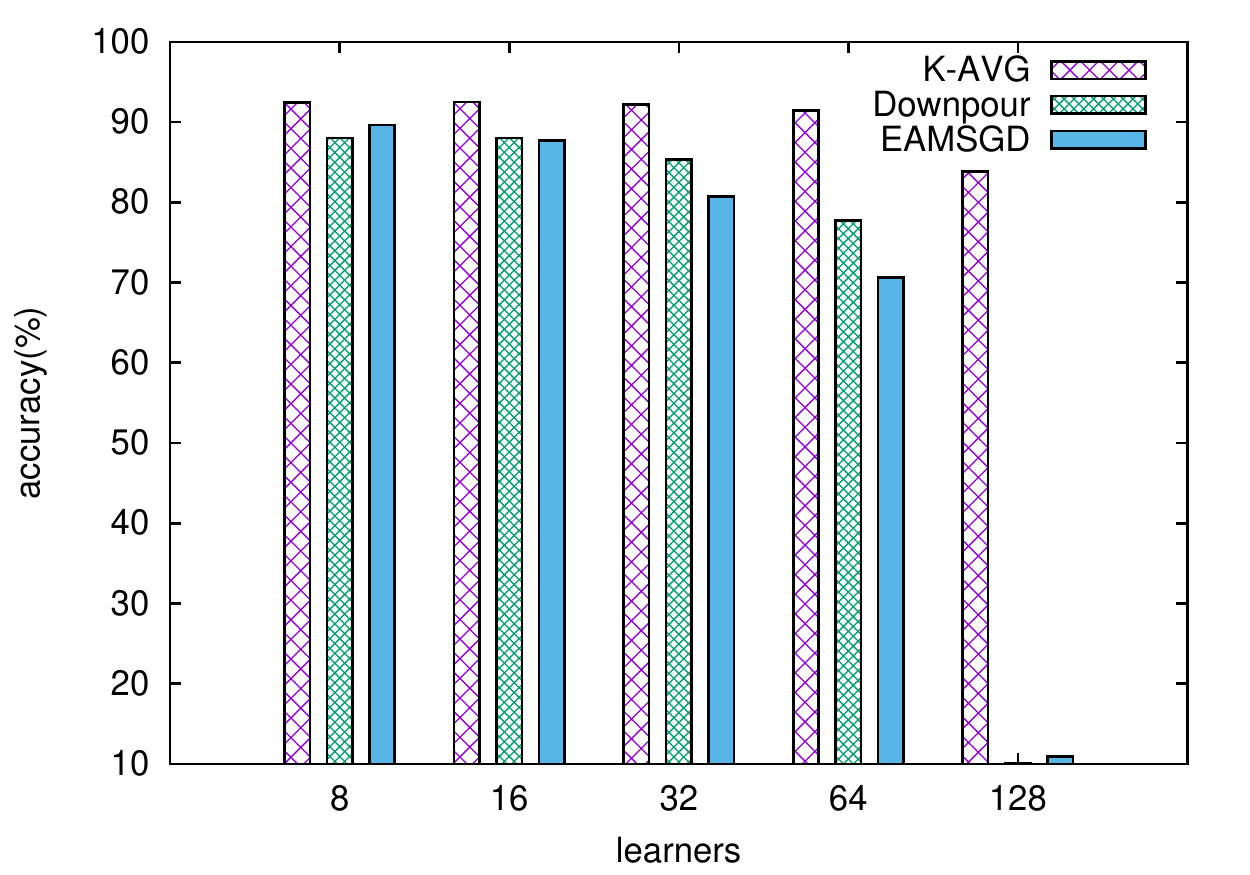}
		\caption{Scaling with vgg}
		\label{fig:p-accuracy-vgg}
	\end{minipage}%
	\begin{minipage}{0.5\textwidth}
		\includegraphics[width=0.95\linewidth]{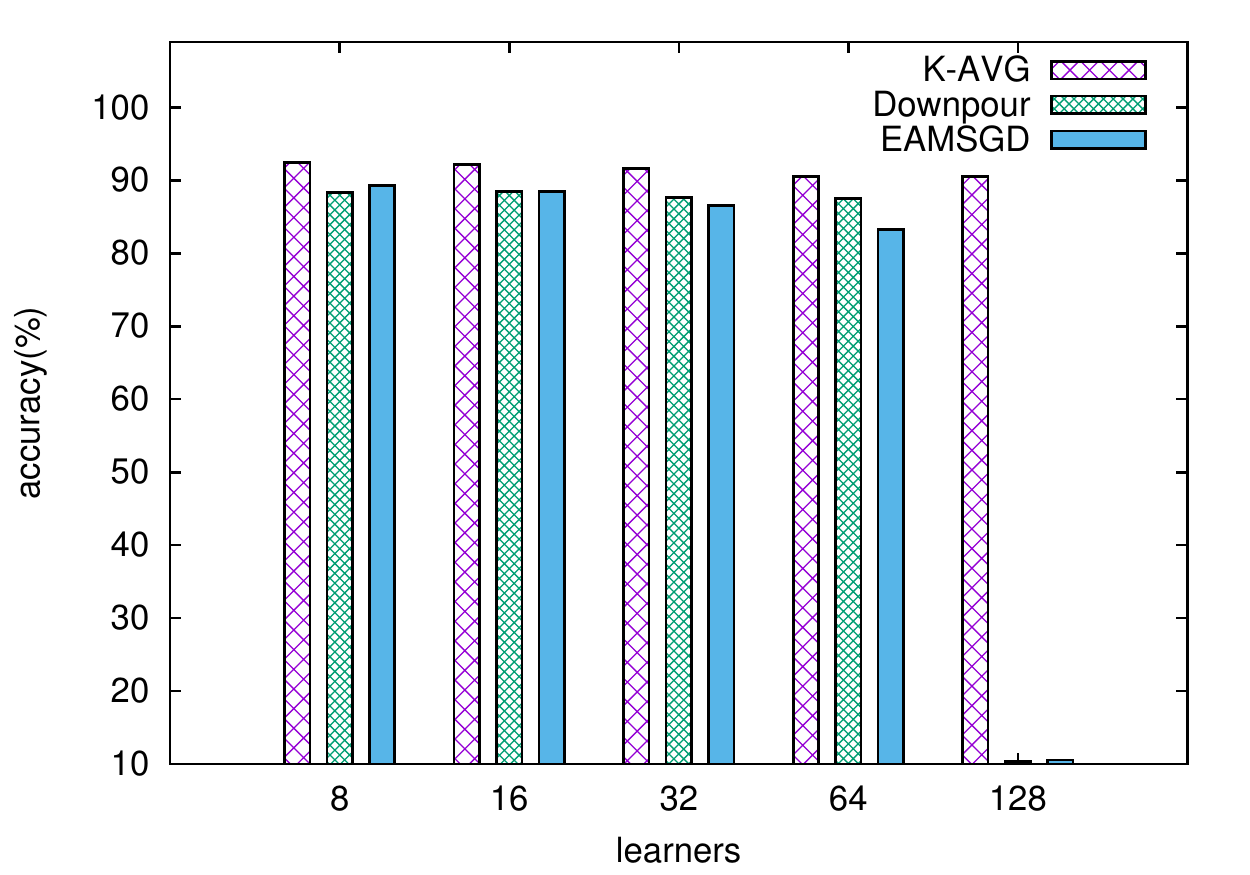}
		\caption{Scaling with nin}
		\label{fig:p-accuracy-nin}
	\end{minipage}
\end{figure}

When we set the initial learning rate $\gamma_0 = 0.1$,  \downpour
achieves around 80\% and 87\% test accuracies for \vgg and \nin,
respectively; \eamsgd still does not converge.

\AVG achieves better test accuracy than \ASGD implementations,
and in our experiment it is also faster.  We measure wall-clock times for
all implementations after 600 epochs.  Naturally for \AVG, $K$ impacts
the ratio of communication vs. computation.  We still use $K=16$. 

Figures \ref{fig:p-speedup-vgg} and \ref{fig:p-speedup-nin} show the speedups of \AVG over \downpour and
\eamsgd, with \vgg and \nin, respectively.  In
Figure \ref{fig:p-speedup-vgg},  when $P=8$, the \ASGD implementations are slightly faster than
\AVG. As $P$ increases, the speedup increases.  When $P=128$, the
speedups are around 2.5 and 2.6 over \downpour and \eamsgd,
respectively. Similar behavior is observed in
Figure \ref{fig:p-speedup-nin} for \nin. 

\begin{figure}
	\begin{minipage}{0.5\textwidth}
		\includegraphics[width=0.95\textwidth]{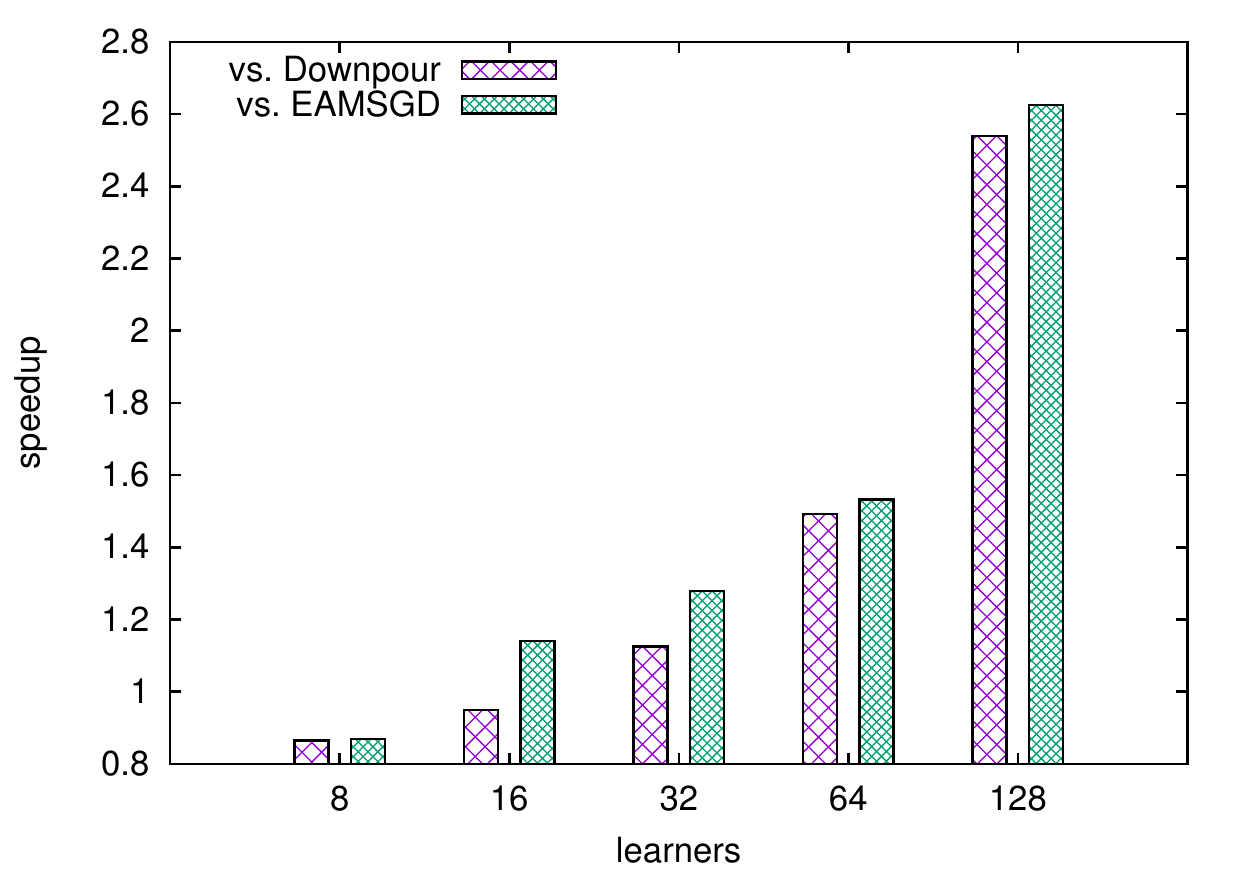}
		\caption{Speedup of \AVG over \ASGD implementations with vgg}
		\label{fig:p-speedup-vgg}
	\end{minipage}
	\begin{minipage}{0.5\textwidth}
		\includegraphics[width=0.95\textwidth]{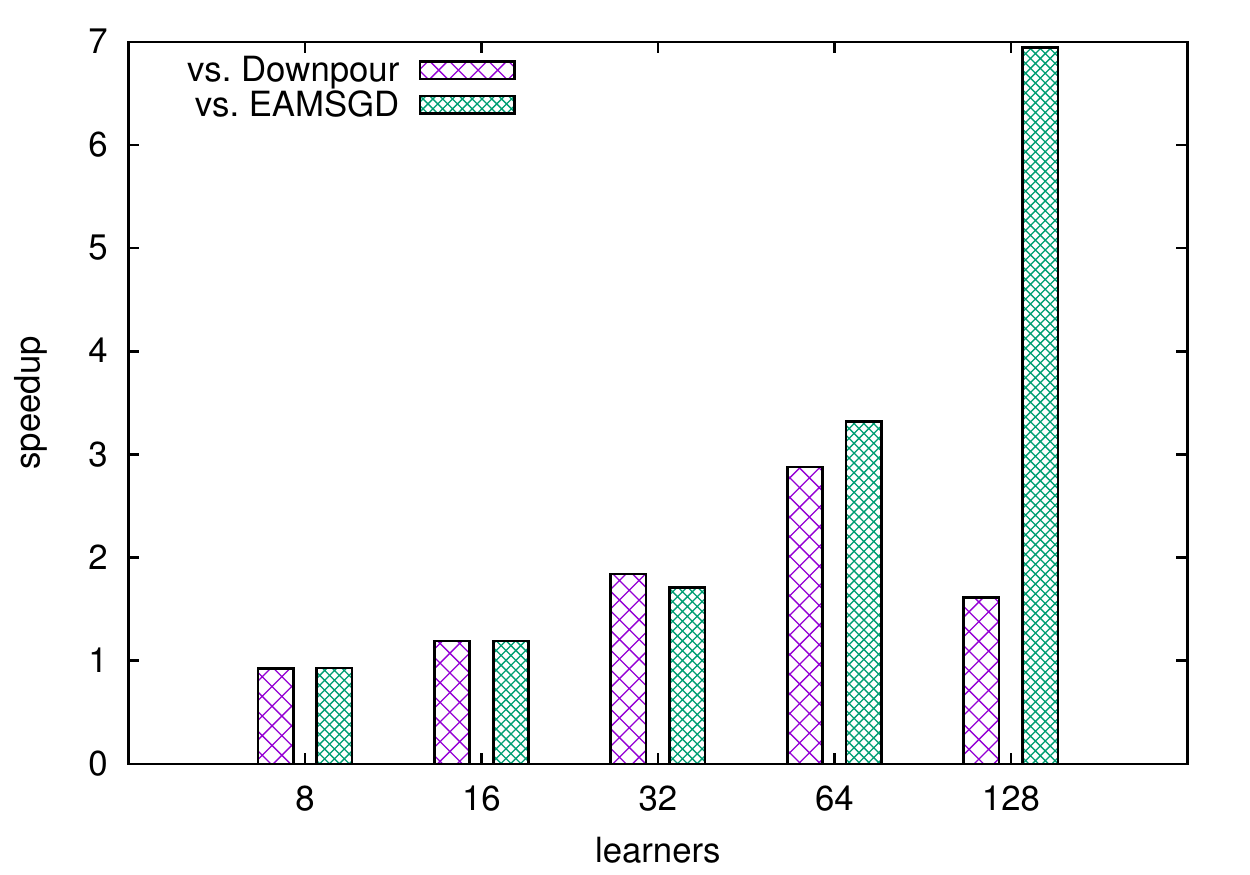}
		\caption{Speedup of \AVG over \ASGD implementations with nin}
		\label{fig:p-speedup-nin}
	\end{minipage}
\end{figure}

\subsection{The optimal delay in averaging for \AVG}
\label{experiment:K}

For \AVG, $K$ regulates its behavior.  From the execution time
perspective, larger $K$ results in fewer communications to process a
given number of data samples.  From the convergence perspective,
smaller $K$ reduces the variances among learners.  In the extreme
case where $K=1$, \AVG is equivalent to synchronous parallelization of
\SGD. People tend to think that smaller $K$ results in faster
convergence in terms of number of data samples processed.  As
discussed in Section \ref{subsection:K}, there are scenarios where
$K_{opt}$ is not 1.

We evaluate the convergence behavior of \AVG with different $K$ values
for \vgg and \nin. We experiment with $K=1, 2, 4, 16, 32$, and $64$.
Figures \ref{fig:K-vgg} and \ref{fig:K-nin} show the test accuracies achieved after 600 epochs
for $P=8, 16, 32, 64$, and $128$, for \vgg and \nin, respectively.
Again we use the initial $\gamma_0$=1, and after every 50 epochs, $\gamma$
is reduced by half. The batch size is fixed as $\bar{B}=32$.

In Figure \ref{fig:K-vgg}, strikingly, none of the experiments show the
optimal value of $K$ for \AVG is 1. $K_{opt}$ ranges from 32 (when $P=8$) farthest away
from 1, to 2 (when $P=64$),  closest to 1.  In this set of experiments, as $P$ increases, $K_{opt}$
tends to decrease.  Also with smaller $P$, \AVG is more forgiving in
terms of the choices of $K$. For example, when $P=8$, test accuracies
for different $K$ are similar. With larger $P$, however, choosing
a $K$ that is too large has severe punishing consequences.  For
example, when $P=128$, $K_{opt}=4$, and the test accuracy degrades rapidly with the increase of
$K$ beyond 4.  

In Figure \ref{fig:K-nin}, almost all experiments show $K_{opt}$=1.  The
exception is with $p=8$, and the accuracy is slightly higher (by
0.27\%) at $K=8$ than $K=1$.  Again we see for small $p$ the choices
of $K$ is not critical, while for large $p$ the degradation in
accuracy is rapid with the increase of $K$ beyond $K_{opt}$. 

\begin{figure}
	\begin{minipage}{0.5\textwidth}
		\includegraphics[width=0.95\textwidth]{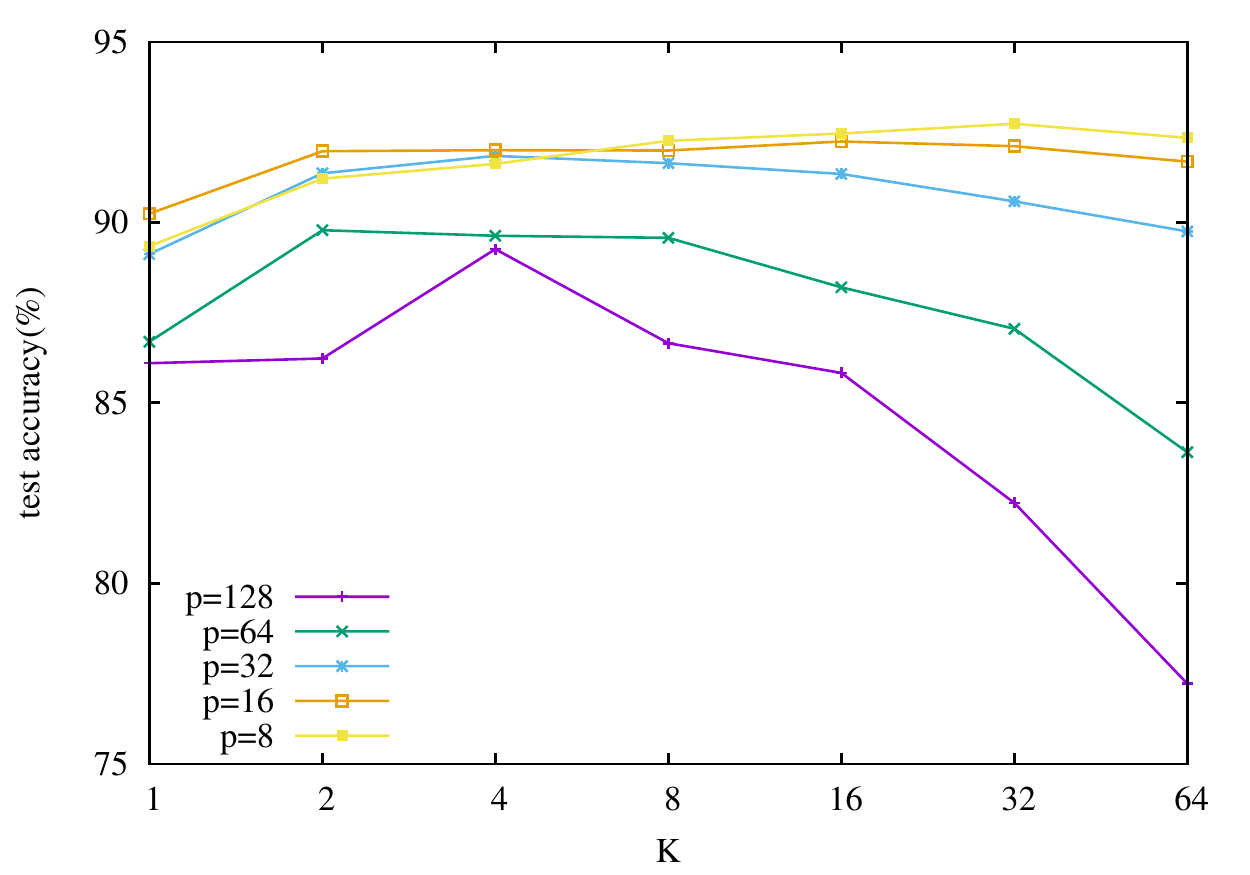}
		\caption{Test accuracy with different $K$}
		\label{fig:K-vgg}
	\end{minipage}
	\begin{minipage}{0.5\textwidth}
		\includegraphics[width=0.95\textwidth]{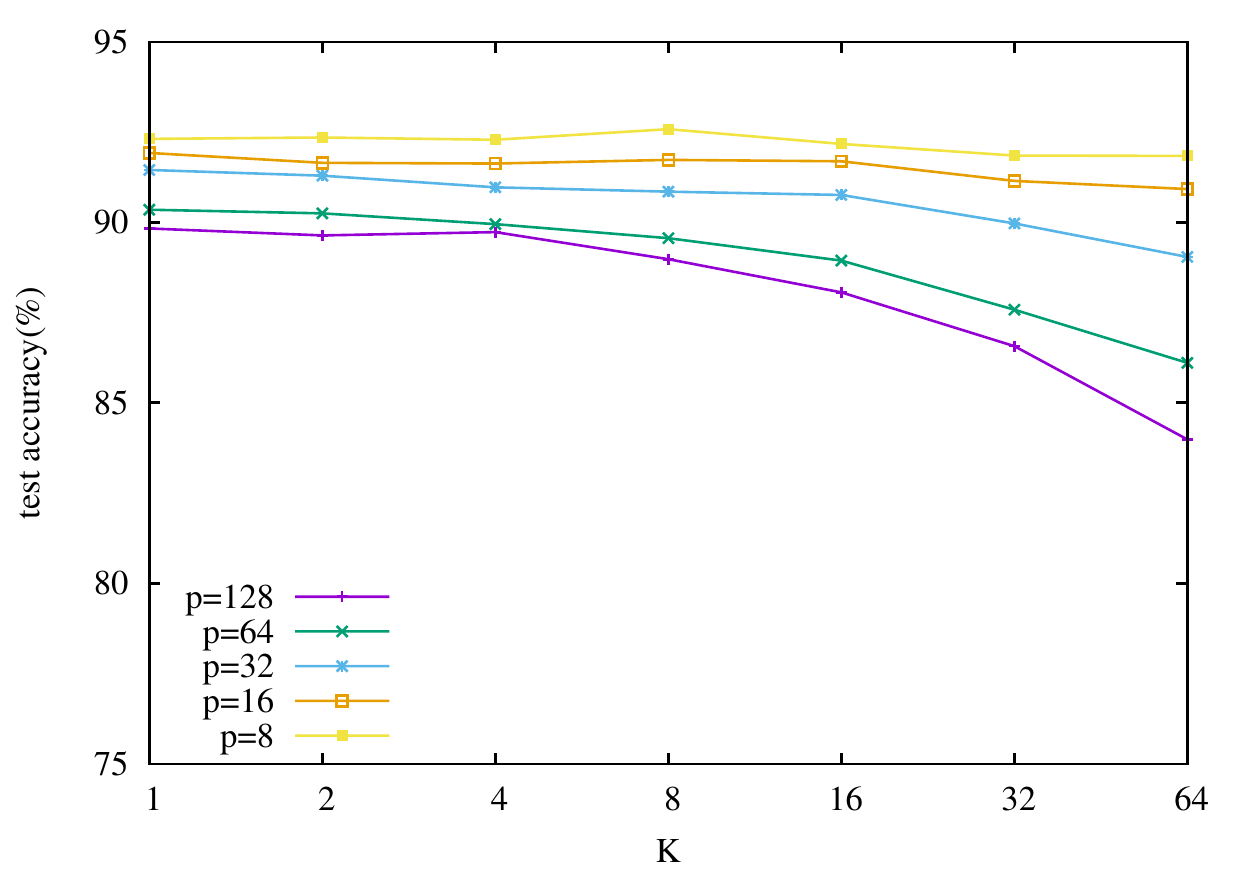}
		\caption{Test accuracy with different $K$}
		\label{fig:K-nin}
	\end{minipage}
\end{figure}

Since in our experiments the same hyper-parameters are used for \vgg
and \nin,  it is likely that the Lipschitz constant $L$ largely
determines the differences in $K_{opt}$ between the two cases. 

\subsection{Convergence comparison with \SGD}
We compare the performance of \AVG against the sequential
implementation, that is, \SGD.  We evaluate the test accuracies achieved
and the wall clock time used. 

Figure \ref{fig:avg-seq-gap} shows the
accuracy gap between \AVG and SGD. With $8$ and $16$
learners, \AVG is slightly worse than sequential \SGD for \vgg but better than \nin.  \AVG and \SGD
achieve comparable accuracies with
32 learners.  As the number of learners reaches 64
and 128, significant accuracy degradation, up to 8.8\% is observed
for \vgg. Interestingly, the accuracy degradation for
\nin is still within 1.3\% with 128 learners. 

\begin{figure}
	\begin{minipage}{0.5\textwidth}
		\includegraphics[width=0.95\textwidth]{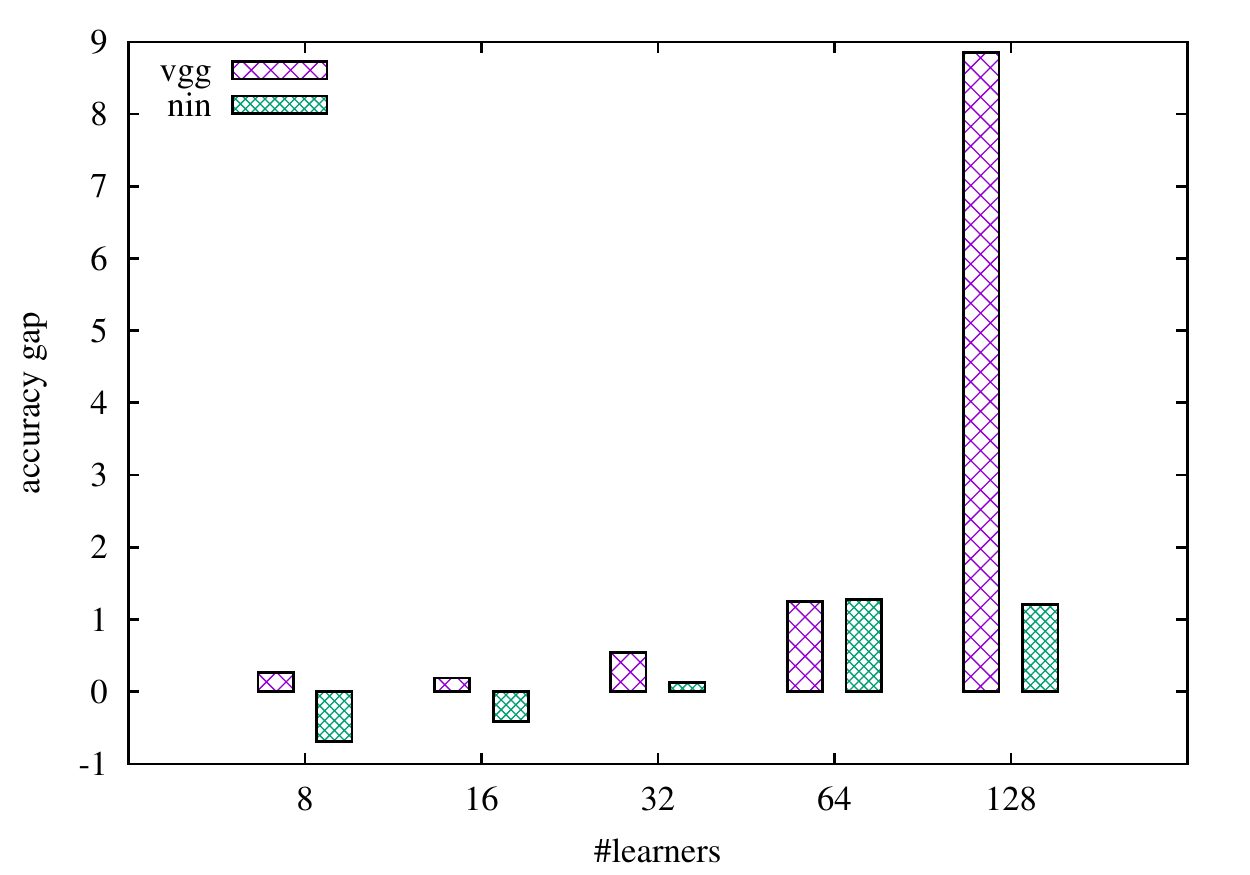}
		\caption{Accuracy gap}
		\label{fig:avg-seq-gap}
	\end{minipage}
	\begin{minipage}{0.5\textwidth}
		\includegraphics[width=0.95\textwidth]{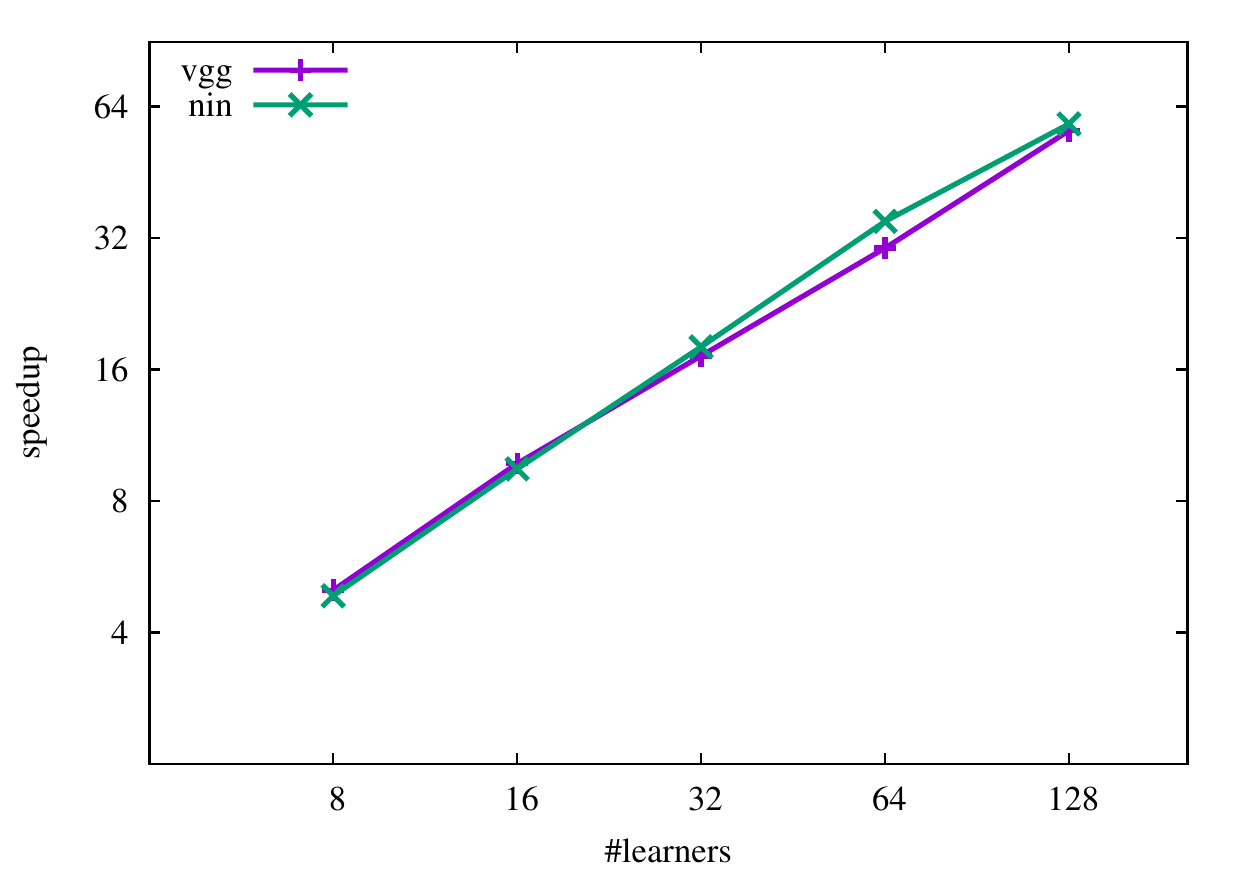}
		\caption{Epoch time speedup}
		\label{fig:epoch-speedup}
	\end{minipage}
\end{figure}

Figure \ref{fig:epoch-speedup} shows the epoch time speedup of \AVG against SGD
with 8, 16, 32, 64, and 128 learners.  With 8 learners, the speedups
for \vgg and \nin are 4.9, and 4.8, respectively. With
128 learners, the speedups for \vgg and \nin are 56.3 and
58.3, respectively. Since twice as
many epochs are run for \AVG in comparison to \SGD, the wallclock
time speedups are half of these numbers. This means linear speedup is achieved.

\section{Conclusion}
In this paper, we adopt and analyze \AVG for solving large scale machine learning problems with nonconvex objectives.
We establish the convergence results of \AVG under both fixed and diminishing stepsize, and show that with a properly chosen sequence of stepsizes,
\AVG achieves similar convergence rate consistent with its sequential counterpart. We show that 
\AVG scales better than \ASGD with properly chosen $K$ when $P$ is large. \AVG allows larger stepsizes that still guarantees convergence while \ASGD may fail to converge. Contrary to popular belief, we show that the length of delay to average learning parameters among parallel learners is 
not necessarily to be $1$. Although, a proper choice of $K_{opt}$ remains unknown, we analytically
explain the dependence of $K_{opt}$ on other parameters which we hope can serve as a users' guide in practical implementations. 

\section{Proofs}
\subsection{Proof of Theorem \ref{theorem:fixed stepsize}}
\label{proof:theorem fixed}
\begin{proof}
	We denote $\widetilde{\bw}_{\alpha}$ as the $\alpha$-th global update in \AVG, and denote $\bw^j_{\alpha+t}$ as $t$-th local update on processor $j$.
	The $(\alpha+1)$-th global average can be written as
	$$
	\widetilde{\bw}_{\alpha+1} = \frac{1}{P}\sum\limits_{j=1}^P  \bw_{\alpha+K}^{j} = \frac{1}{P}\sum\limits_{j=1}^P\Big[\bw_{\alpha}^j - \sum\limits_{t=0}^{K-1}\frac{\gamma_t}{B}\sum\limits_{s=1}^B\nabla F(\bw_{\alpha+t}^j;\xi_{t,s}^j) \Big],
	$$
	according to our algorithm, the random variables $\xi_{t,s}^j$ are i.i.d. for all $t=0,...,K-1$, $s=1,...,B$, and $j=1,...,P$. 
	\begin{align}
	F(\widetilde{\bw}_{\alpha+1}) -F(\widetilde{\bw}_{\alpha}) &\leq \Big\langle\nabla F(\widetilde{\bw}_{\alpha}), \widetilde{\bw}_{\alpha+1}-\widetilde{\bw}_{\alpha} \Big\rangle+\frac{L}{2}\big\|\widetilde{\bw}_{\alpha+1} - \widetilde{\bw}_{\alpha}\big\|_2^2 \\
	& \leq -\Big\langle \nabla F(\widetilde{\bw}_{\alpha}), \frac{1}{P}\sum\limits_{j=1}^P\sum\limits_{t=0}^{K-1}\frac{\gamma_t}{B}\sum\limits_{s=1}^B \nabla F(\bw_{\alpha+t}^j;\xi_{t,s}^j) \Big\rangle\\
	& +\frac{L}{2} \Big\|\frac{1}{P}\sum\limits_{j=1}^P\sum\limits_{t=0}^{K-1}\frac{\gamma_t}{B}\sum\limits_{s=1}^B\nabla F(\bw_{\alpha+t}^j;\xi_{t,s}^j) \Big\|_2^2.
	\end{align}
	Under the assumption that a constant stepsize is implemented within each inner parallel step, i.e. $\gamma_t = \gamma$, for $t=0,...,K-1$, one can immediately simplify the above inequality as 
	\begin{align}
	F(\widetilde{\bw}_{\alpha+1}) -F(\widetilde{\bw}_{\alpha}) &\leq - \frac{\gamma}{PB}\sum\limits_{j=1}^P\sum\limits_{t=0}^{K-1}\sum\limits_{s=1}^B\Big\langle \nabla F(\widetilde{\bw}_{\alpha}),\nabla F(\bw_{\alpha+t}^j;\xi_{t,s}^j) \Big\rangle	\label{rhs1}\\
	&+ \frac{L\gamma^2}{2P^2B^2} \Big\| \sum\limits_{j=1}^P\sum\limits_{t=0}^{K-1}\sum\limits_{s=1}^B \nabla F(\bw_{\alpha+t}^j;\xi_{t,s}^j) \Big\|_2^2 .\label{rhs2}
	\end{align}
	The goal here is to investigate the expectation of $ F(\widetilde{\bw}_{\alpha+1}) -F(\widetilde{\bw}_{\alpha}) $ over all random variables $\xi_{t,s}^j$.
	Under the unbiased estimation Assumption \ref{unbias}, by taking the overall expectation we can immediately get 
	$$
	\EE \frac{1}{B} \sum\limits_{s=1}^B\nabla F(\bw_{\alpha+t}^j;\xi_{t,s}^j ) =	\EE \Big[ \frac{1}{B} \sum\limits_{s=1}^B \EE_{\xi^j_{t,s}}\nabla F(\bw_{\alpha+t}^j;\xi_{t,s}^j | \bw_{\alpha+t}^j)\Big]
	= \EE\nabla F(\bw_{\alpha+t}^j).
	$$
	for fixed $j$ and $t$.
	Here we abuse the expectation notation $\EE$ a little bit. Throughout this proof, $\EE$ always means taking the overall expectation. We will frequently use the above iterative conditional expectation trick in the following analysis.
	As a result, we can drop the summation over $s$ due to an averaging factor $B$ in the dominator of (\ref{rhs1}). 
	Next, we show how to get rid of the summation over $j$. Recall that $\bw^j_{\alpha+1} = \widetilde{\bw}_{\alpha} - \frac{\gamma}{B} \sum_{s=1}^B\nabla F(\widetilde{\bw}_{\alpha};\xi^j_{0,s})$. Obviously, $\bw_{\alpha+1}^j$, $j=1,...,P$ are i.i.d. for fixed $\widetilde{\bw}_{\alpha}$ because $\xi_{0,s}^j$, $j=1,...,P$, $s=1,...,B$ are i.i.d. Similarly,  $\bw^j_{\alpha+2} = \bw^j_{\alpha+1} - \frac{\gamma}{B} \sum_{s=1}^B\nabla F(\bw^j_{\alpha+1};\xi^j_{1,s})$, $j=1,...,P$ are i.i.d. due to the fact that 
	$\bw_{\alpha+1}^j$'s  are i.i.d., $\xi_{1,s}^j$'s are i.i.d., and $\bw_{\alpha+1}^j$'s are independent from $\xi_{1,s}^j$'s. By induction, one can easily show that for each fixed $t$, $\bw_{\alpha+t}^j$, $j=1,...,P$ are i.i.d. Thus for each fixed $t$
	$$
	\frac{1}{P} \sum\limits_{j=1}^P \EE \nabla F(\bw_{\alpha+t}^j) = \EE \nabla F(\bw_{\alpha+t}^j).
	$$
	We can therefore get rid of the summation over $j$ as well in (\ref{rhs1}). By taking the overall expectation on both sides of (\ref{rhs1}) and (\ref{rhs2}), we have
	\begin{align}
	\EE F(\widetilde{\bw}_{\alpha+1}) -F(\widetilde{\bw}_{\alpha}) &\leq - \gamma\sum\limits_{t=0}^{K-1}\EE\Big\langle \nabla F(\widetilde{\bw}_{\alpha}),\nabla F(\bw_{\alpha+t}^j) \Big\rangle	\label{newrhs1}\\
	&+ \frac{L\gamma^2}{2P^2B^2} \EE \Big\| \sum\limits_{j=1}^P\sum\limits_{t=0}^{K-1}\sum\limits_{s=1}^B \nabla F(\bw_{\alpha+t}^j;\xi_{t,s}^j) \Big\|_2^2 . \label{newrhs2}
	\end{align}
	We are going to bound (\ref{newrhs1}) and (\ref{newrhs2}) respectively.
	Obviously (we treat $\widetilde{\bw}_{\alpha}$ as a constant vector at this moment), 
	\begin{equation}
	\label{proof:crossterm}
	\begin{aligned}
	- \gamma\sum\limits_{t=0}^{K-1}\EE\Big\langle \nabla F(\widetilde{\bw}_{\alpha}),\nabla F(\bw_{\alpha+t}^j) \Big\rangle& =  -\frac{\gamma}{2} \sum\limits_{t=0}^{K-1}\Big( \big\| \nabla F(\widetilde{\bw}_{\alpha})\big\|_2^2
	+\EE \big\| \nabla F(\bw_{\alpha+t}^j)\big\|_2^2 \Big)\\
	& + \frac{\gamma}{2} \sum\limits_{t=0}^{K-1} \EE \big\| \nabla F(\bw_{\alpha+t}^j) - \nabla F(\widetilde{\bw}_{\alpha})\big\|_2^2 \\
	& \leq -\frac{(K+1)\gamma}{2} \big\| \nabla F(\widetilde{\bw}_{\alpha})\big\|_2^2 -\frac{\gamma}{2} \sum\limits_{t=1}^{K-1}\EE \big\| \nabla F(\bw_{\alpha+t}^j)\big\|_2^2 \\
	& + \frac{L^2\gamma}{2} \sum\limits_{t=1}^{K-1}  \EE \big\| \bw_{\alpha+t}^j -\widetilde{\bw}_{\alpha}\big\|_2^2 ,
	\end{aligned}
	\end{equation}
	where we used the fact that $\widetilde{\bw}_{\alpha}^j=\widetilde{\bw}_{\alpha}$, for $j=1,...,P$ for the last term and the assumption that gradient $\nabla F$ is Lipschitz continuous.
	Note that
	\begin{align*}
	\EE \big\| \bw_{\alpha+t}^j -\widetilde{\bw}_{\alpha}\big\|_2^2 &= \frac{\gamma^2}{B^2} \EE \big\| \sum\limits_{i=0}^{t-1}\sum\limits_{s=1}^B   \nabla F(\bw_{\alpha+i}^j ; \xi_{i,s}^j)  \big\|_2^2
	\leq \frac{t\gamma^2}{B^2}\EE  \sum\limits_{i=0}^{t-1}\big\|\sum\limits_{s=1}^B  \nabla F(\bw_{\alpha+i}^j ; \xi_{i,s}^j)  \big\|_2^2\\
	& = \frac{t\gamma^2}{B^2}\EE  \sum\limits_{i=0}^{t-1}\big\|\sum\limits_{s=1}^B  \big( \nabla F(\bw_{\alpha+i}^j ; \xi_{i,s}^j) - \nabla F(\bw_{\alpha+i}^j )+ \nabla F(\bw_{\alpha+i}^j )  \big)\big\|_2^2 \\
	& = \frac{t\gamma^2}{B^2}\EE  \sum\limits_{i=0}^{t-1}\big\|\sum\limits_{s=1}^B  \big( \nabla F(\bw_{\alpha+i}^j ; \xi_{i,s}^j) - \nabla F(\bw_{\alpha+i}^j ) \big)\big\|_2^2 
	+ t\gamma^2 \EE  \sum\limits_{i=0}^{t-1} \big\| \nabla F(\bw_{\alpha+i}^j ) \big)\big\|_2^2  \\
	& + \frac{t\gamma^2}{B^2} 2\EE  \sum\limits_{i=0}^{t-1} \EE_{\xi_{i,*}}^j\Big\langle \sum\limits_{s=1}^B  \big( \nabla F(\bw_{\alpha+i}^j ; \xi_{i,s}^j) - \nabla F(\bw_{\alpha+i}^j ) \big), B \nabla F(\bw_{\alpha+i}^j ) \Big\rangle \\
	& = \frac{t\gamma^2}{B^2} \EE  \sum\limits_{i=0}^{t-1}\sum\limits_{s=1}^B  \big\| \nabla F(\bw_{\alpha+i}^j ; \xi_{i,s}^j) - \nabla F(\bw_{\alpha+i}^j ) \big\|_2^2 
	+ t\gamma^2 \EE  \sum\limits_{i=0}^{t-1} \big\| \nabla F(\bw_{\alpha+i}^j ) \big\|_2^2  \\
	& \leq \frac{t^2\gamma^2 M}{B} +  t\gamma^2 \EE  \sum\limits_{i=0}^{t-1} \big\| \nabla F(\bw_{\alpha+i}^j ) \big\|_2^2 
	\label{expectation:w}
	\end{align*}
	where the first inequality is due to Cauchy-Schwartz inequality, the last equity is due to Assumption \ref{unbias} and the last inequality is due to Assumption \ref{variance}.
	We plug the above inequality back into (\ref{proof:crossterm})
	and get
	\begin{equation}
	\label{boundrhs1}
	\begin{aligned}
	- \gamma\sum\limits_{t=0}^{K-1}\EE\Big\langle \nabla F(\widetilde{\bw}_{\alpha}),\nabla F(\bw_{\alpha+t}^j) \Big\rangle &\leq -\frac{(K+1)\gamma}{2}\Big(1-\frac{L^2\gamma^2K(K-1)}{2(K+1)} \Big) \big\| \nabla F(\widetilde{\bw}_{\alpha})\big\|_2^2 \\
	& -\frac{\gamma}{2}\Big( 1- \frac{L^2\gamma^2(K+1)(K-2)}{2}\Big) \sum\limits_{t=1}^{K-1}\EE \big\| \nabla F(\bw_{\alpha+t}^j)\big\|_2^2   \\
	&+ \frac{L^2\gamma^3M(2K-1)K(K-1)}{12B} \\
	\end{aligned}
	\end{equation}
	
	On the other hand, to bound (\ref{newrhs2}), we can apply the similar analysis,
	\begin{equation}
	\begin{aligned}
	&\frac{L\gamma^2}{2P^2B^2} \EE \Big\| \sum\limits_{j=1}^P\sum\limits_{t=0}^{K-1}\sum\limits_{s=1}^B \nabla F(\bw_{\alpha+t}^j;\xi_{t,s}^j) \Big\|_2^2 \\
	& \leq \frac{LK\gamma^2}{2P^2B^2} \EE \sum\limits_{t=0}^{K-1}\Big\| \sum\limits_{j=1}^P\sum\limits_{s=1}^B \nabla F(\bw_{\alpha+t}^j;\xi_{t,s}^j) \Big\|_2^2 \\	
	& \leq \frac{LK\gamma^2}{2P^2B^2} \EE \sum\limits_{t=0}^{K-1}\Big\| \sum\limits_{j=1}^P\sum\limits_{s=1}^B \Big(\nabla F(\bw_{\alpha+t}^j;\xi_{t,s}^j)  
	-\nabla F(\bw_{\alpha+t}^j)+ \nabla F(\bw_{\alpha+t}^j)\Big)\Big\|_2^2 \\	
	& \leq \frac{LK^2\gamma^2M}{2PB} + \frac{LK\gamma^2}{2}\sum\limits_{t=0}^{K-1}  \EE\Big\|\nabla F(\bw_{\alpha+t}^j)\Big\|_2^2\\
	& \leq \frac{LK^2\gamma^2M}{2PB} +  \frac{LK\gamma^2}{2}\sum\limits_{t=0}^{K-1}\EE\Big\|\nabla F(\bw_{\alpha+t}^j)\Big\|_2^2 \label{boundrhs2}. 
	\end{aligned}
	\end{equation}
	Combine the results in (\ref{boundrhs1}) and (\ref{boundrhs2}), we have 
	\begin{align*}
	\EE F(\widetilde{\bw}_{\alpha+1}) -F(\widetilde{\bw}_{\alpha}) & \leq -\frac{(K+1)\gamma}{2}\Big(1-\frac{L^2\gamma^2K(K-1)}{2(K+1)} -\frac{L\gamma K}{(K+1)}  \Big) \big\| \nabla F(\widetilde{\bw}_{\alpha})\big\|_2^2 \\
	&  -\frac{\gamma}{2}\Big( 1- \frac{L^2\gamma^2(K+1)(K-2)}{2} -L\gamma K\Big)\sum\limits_{t=1}^{K-1}\EE\Big\|\nabla F(\bw_{\alpha+t}^j)\Big\|_2^2 \\
	& + \frac{L^2\gamma^3M(2K-1)K(K-1)}{12B} + \frac{LK^2\gamma^2M}{2PB}
	\end{align*}
	Under the condition $1 \geq  \frac{L^2\gamma^2(K+1)(K-2)}{2} + L\gamma K$, the second term on the right hand side can be discarded. This condition also implies that
	$$
	\frac{(K+1)\gamma}{2}\Big(1-\frac{L^2\gamma^2K(K-1)}{2(K+1)} -\frac{L\gamma K}{(K+1)}  \Big) \geq \frac{\gamma(K- L^2\gamma^2)}{2} .
	$$
	Then, together with the condition $1-\delta \geq L^2 \gamma^2$ for some $0<\delta<1$, we have
	\begin{align*}
	\EE F(\widetilde{\bw}_{\alpha+1}) - F(\widetilde{\bw}_{\alpha})  \leq -\frac{(K-1+\delta)\gamma}{2} \big\| \nabla F(\widetilde{\bw}_{\alpha})\big\|_2^2  + \frac{L\gamma^2KM}{2B} \Big(\frac{K}{P} + \frac{L(2K-1)(K-1)\gamma}{6}\Big) .
	\end{align*}
	If we allow both batch size and step size to change after each averaging step, by taking the summation we have
	\begin{align*}
	\EE F(\widetilde{\bw}_{N}) - F(\widetilde{\bw}_{1})  \leq \sum\limits_{j=1}^N  -\frac{(K-1+\delta)\gamma_j}{2} \big\| \nabla F(\widetilde{\bw}_{\alpha})\big\|_2^2  + \frac{L\gamma_j^2KM}{2B_j} \Big(\frac{K}{P} + \frac{L(2K-1)(K-1)\gamma_j}{6}\Big) .
	\end{align*}
	Under Assumption \ref{lowerbound}, we have
	\begin{equation}
	F^* - F(\widetilde{\bw}_1) \leq F(\widetilde{\bw}_{n}) -F(\widetilde{\bw}_1),
	\end{equation}
	Combining both, we can immediately get the following bound
	\begin{equation}
	\EE  \sum\limits_{j=1}^N \gamma_j \big\| \nabla F(\widetilde{\bw}_j)\big\|_2^2 \leq \frac{ 2(F(\widetilde{\bw}_1)-F^*)}{K-1+\delta} + \sum\limits_{j=1}^N\frac{L\gamma_j^2KM}{(K-1+\delta)B_j} 
	\Big(\frac{K}{P} + \frac{L(2K-1)(K-1)\gamma_j}{6}\Big) . \label{general_bound}
	\end{equation}
	If we employ a constant step size and batch size,  we get the bound on the expected average squared gradient norms of $F$ as following:
	$$
	\frac{1}{N}\EE \sum\limits_{j=1}^N \big\| \nabla F(\widetilde{\bw}_j)\big\|_2^2 \leq \Big[\frac{2(F(\widetilde{\bw}_1)-F^*)}{N(K-1+\delta)\bar{\gamma}} + \frac{LK\bar{\gamma} M}{\bar{B}(K-1+\delta)} \Big(\frac{K}{P} + \frac{L(2K-1)(K-1)\bar{\gamma}}{6}\Big) \Big]  ,
	$$
	which completes the proof.
\end{proof}

\subsection{Proof of Theorem \ref{theorem:dynamicsize}}
\label{proof:theorem dynamic}
\begin{proof}
	If we use a diminishing step size $\gamma_j$ and growing batch size $B_j$, 
	Thus, from (\ref{general_bound}), we divide both sides with $\sum_{j=1}^N \gamma_j$, we have
	\begin{align*}
	&\EE  \sum\limits_{j=1}^N \frac{\gamma_j}{\sum_{j=1}^N \gamma_j}\big\| \nabla F(\widetilde{\bw}_j)\big\|_2^2 \\
	& \leq \frac{ 2(F(\widetilde{\bw}_1)-F^*)}{(K-1+\delta)\sum_{j=1}^N \gamma_j} + \sum\limits_{j=1}^N\frac{LK\gamma_j^2M}{B_j\sum_{j=1}^N \gamma_j(K-1+\delta)} \Big(\frac{K}{P} + \frac{L(2K-1)(K-1)\gamma_j}{6}\Big).
	\end{align*}
	If the following restriction of step size is satisfied,
	$$
	\lim\limits_{N\rightarrow \infty} \sum\limits_{j=1}^{N} \gamma_j = \infty,~~ \lim\limits_{N\rightarrow \infty} \sum\limits_{j=1}^N\frac{K\gamma_j^2}{PB_j} < \infty,~~\lim\limits_{N\rightarrow \infty} \sum\limits_{j=1}^N \gamma_j^3 \leq \infty,
	$$
	it immediately implies the convergence of $ \EE  \sum_{j=1}^N \frac{\gamma_j}{\sum_{j=1}^N \gamma_j}\big\| \nabla F(\widetilde{\bw}_j)\big\|_2^2$ when $N\rightarrow \infty$.
\end{proof}

\subsection{Proof of Corollary \ref{corollary}}
\label{proof:corollary}
\begin{proof}
	At first, we assume that $K/P > L(2K-1)(K-1)\bar{\gamma}/6$. Then we can rewrite the bound (\ref{fixed2}) as 
	$$
	\frac{1}{N}\EE \sum\limits_{j=1}^N \big\| \nabla F(\widetilde{\bw}_j)\big\|_2^2 \leq \Big[\frac{ 2(F(\widetilde{\bw}_1)-F^*)}{N(K-1+\delta)\bar{\gamma}} + \frac{2LK^2\bar{\gamma} M}{P\bar{B}(K-1+\delta)}   \Big] 
	$$	
	Set 
	$$
	f(\bar{\gamma}) = \frac{ 2(F(\widetilde{\bw}_1)-F^*)}{N(K-1+\delta)\bar{\gamma}} + \frac{2LK^2\bar{\gamma} M}{P\bar{B}(K-1+\delta)} .
	$$
	By taking $f'=0$, one immediately get 
	$$
	\bar{\gamma}_1 = \sqrt{\frac{(F(\widetilde{\bw}_1)-F^*)\bar{B}P}{K^2LM}}*\frac{1}{\sqrt{N}}.
	$$
	and 
	$$
	f(\bar{\gamma}_1) = \sqrt{\frac{(F(\widetilde{\bw}_1)-F^*)LM}{\bar{B}P}}*\frac{4K}{(K-1+\delta)\sqrt{N}}.
	$$
	By plugging in the value of $\bar{\gamma}_1$, $K/P > L(2K-1)(K-1)\bar{\gamma}/6$ implies that 
	$$
	N> \frac{(F(\widetilde{\bw}_1)-F^*)LBP^3(2K-1)(K-1)}{6K^4M}.
	$$
	At last, we need to check condition (\ref{cond:stepsize}) is hold. It is sufficient to set  
	$$
	N > \frac{L(F(\widetilde{\bw}_1)-F^*)\bar{B}P}{M}.
	$$

	\begin{comment}
	Now we assume that $K/P < L(2K-1)(K-1)\bar{\gamma}/6$. Then we can rewrite the bound (\ref{fixed2}) as 
	$$
	\frac{1}{N}\EE \sum\limits_{j=1}^N \big\| \nabla F(\widetilde{\bw}_j)\big\|_2^2 \leq \Big[\frac{ 2(F(\widetilde{\bw}_1)-F^*)}{N(K-1+\delta)\bar{\gamma}} + \frac{L^2(2K-1)(K-1)K\bar{\gamma}^2 M}{3P\bar{B}(K-1+\delta)}   \Big] 
	$$	
	Set 
	$$
	g(\bar{\gamma}) = \frac{ 2(F(\widetilde{\bw}_1)-F^*)}{N(K-1+\delta)\bar{\gamma}} + \frac{L^2(2K-1)(K-1)K\bar{\gamma}^2 M}{3\bar{B}(K-1+\delta)}.
	$$
	By taking $g'=0$ ($K\neq 1$), we get 
	$$
	\bar{\gamma}_2=\Big(\frac{3(F(\widetilde{\bw}_1)-F^*)\bar{B}}{K(K-1)(2K-1)L^2MN} \Big)^{1/3}.
	$$
	and 
	$$
	g(\bar{\gamma}_2) \leq 4 \Big(\frac{(F(\widetilde{\bw}_1)-F^*)^2L^2M}{\bar{B}}\Big)^{1/3}*\frac{[K(K-1)(2K-1)]^{1/3}}{K-1+\delta}*\frac{1}{N^{2/3}}.
	$$
	As a result,
	$$
	\frac{1}{N}\EE \sum\limits_{j=1}^N \big\| \nabla F(\widetilde{\bw}_j)\big\|_2^2 \leq 4 \Big(\frac{(F(\widetilde{\bw}_1)-F^*)^2L^2M}{\bar{B}}\Big)^{1/3}*\frac{[K(K-1)(2K-1)]^{1/3}}{K-1+\delta}*\frac{1}{N^{2/3}}.
	$$
	Again, to ensure $K/P < L(2K-1)(K-1)\bar{\gamma}/6$ and condition (\ref{cond:stepsize}), it is sufficient to set
	$$
	\frac{24(F(\widetilde{\bw}_1)-F^*)L\bar{B}K^2}{M(2K-1)(K-1)}  \leq N \leq \frac{(F(\widetilde{\bw}_1)-F^*)L\bar{B}P^3(2K-1)^2(K-1)^2}{72MK^4}.
	$$
	\end{comment}
	This completes the proof of Corollary \ref{corollary}.
\end{proof}

\bibliographystyle{plainnat}
\bibliography{refer}

\end{document}